\documentclass[journal]{IEEEtran}
\usepackage{cite}
\usepackage{amsmath}
\usepackage{amssymb}
\usepackage{amsthm}
\usepackage{amsfonts}

\newtheorem{theorem}{Theorem}
\newtheorem*{theorem*}{Theorem}
\newtheorem*{corollary*}{Corollary}

\theoremstyle{remark}

\newtheorem*{claim*}{Claim}

\theoremstyle{plain}
\newtheorem{scenario}{Scenario}

\newtheorem{constraint}{PK}

\usepackage{bbm}
\usepackage{xfrac}
\usepackage[caption=false]{subfig}

\usepackage{textcomp}
\usepackage{xcolor}
\def\BibTeX{{\rm B\kern-.05em{\sc i\kern-.025em b}\kern-.08em
    T\kern-.1667em\lower.7ex\hbox{E}\kern-.125emX}}

\usepackage{graphicx} % Required for inserting images

\usepackage{tikz}
\usepackage{pgfplots}
\usetikzlibrary{calc}

\usetikzlibrary{matrix,positioning,quotes,fit}
\usetikzlibrary{backgrounds}
\usetikzlibrary {arrows.meta}
\usetikzlibrary{decorations.pathmorphing}
\usetikzlibrary{positioning,decorations.pathreplacing}
\pgfplotsset{compat=1.18}

\tikzset{standard/.style={matrix of nodes,left delimiter={(},right delimiter={)},inner sep=0pt,nodes={inner sep=0.3em}}}
\tikzset{submatrix/.style = {rectangle, rounded corners,
  fill=yellow, draw, inner sep=0pt}}

\usetikzlibrary{calc}
\usepackage{relsize}
\usepackage{scalefnt}

\tikzset{fontscale/.style = {font=\relsize{#1}}}

\usepackage[utf8]{inputenc}%added by XZ to have accent in the names like Barré ..
\usepackage{algorithm}
\usepackage{algpseudocode}
\usepackage{multirow}
\usepackage{tabularx,booktabs}
\usepackage{rotating}
\usepackage{textcomp}
\usepackage{xcolor}
\usepackage{enumerate}
\usepackage{enumitem} %for enumerate "wide" option
\usepackage{wrapfig}
\usepackage{tcolorbox}

\usepackage{times}
\usepackage{soul}
\usepackage{url}
\usepackage[hidelinks]{hyperref}
\usepackage[absolute,overlay]{textpos}% by XZ: for the text in the top margin..

\usepackage{nomencl}%XZ added for notations
\makenomenclature%XZ added to do notation table

\newcount\Comments  % 0 suppresses notes to selves in text
\usepackage{color}
\definecolor{darkgreen}{rgb}{0,0.5,0}
\definecolor{purple}{rgb}{1,0,1}
\newcommand{\kibitz}[2]{\ifnum\Comments=1\textcolor{#1}{#2}\fi}
\newcommand{\kizito}[1]{\kibitz{red}      {[Kizito: #1]}}

\begin{document}

\title{The Impact of Operational-Data Fidelity when Assessing Safety-Critical Autonomous-Vehicle Software}
%\title{Conservative Reliability Assessment Under Failure-Type Uncertainty in Autonomous-Vehicle Software}
%\title{The Impact of Operational-Data Fidelity in Assessments of Safety-Critical Autonomous-Vehicle Software}

\author{Kizito~Salako, Rabiu~Tsoho~Muhammad
        % <-this % stops a space
\IEEEcompsocitemizethanks{\IEEEcompsocthanksitem K.~Salako and R.~Muhammad are with the Centre for Software Reliability, City St. George's, University of London, Northampton Square EC1V 0HB, U.K.
%\protect\\
% note need leading \protect in front of \\ to get a newline within \thanks as
% \\ is fragile and will error, could use \hfil\break instead.
(email: \{k.o.salako,rabiu-tsoho.muhammad\}@city.ac.uk)}%<-this % stops an unwanted space
%\thanks{Manuscript received February, 2026. Resubmission June, 2026}
}

% \author{\IEEEauthorblockN{Kizito Salako\IEEEauthorrefmark{1}, Xingyu Zhao\IEEEauthorrefmark{2}\\}%
% \IEEEauthorblockA{\IEEEauthorrefmark{1}\textit{Centre for Software Reliability, City, University of London}, Northampton sq. EC1V 0HB, U.K.\\
% Email: k.o.salako@city.ac.uk\\}%
% \IEEEauthorblockA{\IEEEauthorrefmark{2}\textit{Department of Computer Science, University of Liverpool}, Ashton Street L69 3BX,  U.K.\\
% Email: xingyu.zhao@liverpool.ac.uk}%
% \thanks{..}}

%\markboth{Journal of \LaTeX\ Class Files,~Vol.~14, No.~8, August~2015}%
%{Shell \MakeLowercase{\textit{et al.}}: Bare Demo of IEEEtran.cls for Computer Society Journals}

\IEEEtitleabstractindextext{%
\begin{abstract}
For safety--critical software, data from the software's operational past (e.g. a sequence of success and failure events experienced by the software) can provide strong statistical support for reliability claims about the software. However, such data might not describe past software failure events in sufficient detail, and this might leave a reliability assessment (based on this data) unable to account for important features of past software failures. In this paper, by extending conservative Bayesian inference (CBI) techniques used in reliability assessment, we illustrate a principled statistical approach for checking the robustness of reliability claims derived from insufficiently detailed operational data. We demonstrate the extent to which insufficient detail in operational data can undermine software reliability claims in autonomous vehicle (AV) safety assessment scenarios. Reliability claims derived from insufficiently fine-grained data might be dangerously optimistic, despite a concerted effort by an assessor to use such data conservatively during the assessment. While these findings are consistent with previous work on the impact of \emph{statistical model fidelity} in Bayesian software reliability assessments, our work clarifies why attempts to use low-fidelity data conservatively can be na\"{\i}ve, and we give the first conservative estimates of the impact of \emph{data fidelity} on assessments.     
\end{abstract}

% Note that keywords are not normally used for peerreview papers.
\begin{IEEEkeywords}
software reliability assessment, conservative Bayesian inference, reliability bounds, operational data fidelity, autonomous vehicle safety
\end{IEEEkeywords}
}
\maketitle
\IEEEdisplaynontitleabstractindextext

%\begin{textblock*}{20cm}(1cm,1cm)
%\textcolor{red}{Published at ISSRE2019: 
%\url{https://ieeexplore.ieee.org/document/8987509}}
%\end{textblock*}

%\IEEEpeerreviewmaketitle

%\IEEEraisesectionheading{
\section*{List of Abbreviations and Symbols}

\setlist[description]{%
  leftmargin=!,
  labelwidth=2.3cm, % adjust (e.g., 2.0–2.8cm) for alignment
  labelsep=0.6em,
  itemsep=0.2ex,
  topsep=0.3ex
}

{\small
\begin{description}
  \item[ADS] automated driving system
  \item[AEB] autonomous emergency braking
  %\item[AI] artificial intelligence
  %\item[ANSI/UL] American National Standards Institute / Underwriters Laboratories
  \item[ASIL] automotive safety integrity level
  \item[AUC] area under the curve
  \item[AV] autonomous vehicle
  \item[CBI] conservative Bayesian inference
  \item[CEHSS] critical-event handling support system
  %\item[F1] F1 score; harmonic mean of precision and recall
  \item[FP, FN] false positive, false negative
  %\item[FPR, TPR] false-positive rate, true-positive rate
  \item[FTDSS] fault-tolerant decision support system
  \item[GNSS] global navigation satellite system
    \item[IMU] inertial measurement unit
  \item[HARA] hazard analysis and risk assessment
  \item[i.i.d.] independent and identically distributed
  %\item[ISO] International Organization for Standardization
  %\item[ISO/PAS] International Organization for Standardization/Publicly Available Specification
  %\item[ML] machine learning
  \item[ODD] operational design domain
  \item[OOD] out of distribution
  \item[\emph{pfc}] probability of failure per classification
  \item[\textbf{PK}] evidence-based prior knowledge
  %\item[PR] precision--recall
  \item[ROC] receiver operating characteristic
  \item[RSS] responsibility-sensitive safety
  \item[SOTIF] safety of the intended functionality
  \item[SW] software
  %\item[TNR] true-negative rate
  %\item[V\&V] verification and validation

  \item[$P$] unknown probability of FP classification
  \item[$Q$] unknown probability of FN classification
  \item[$p,q$] generic values of the random quantities $P,Q$
  \item[$\Theta$] unknown probability of correct classification
  \item[$\boldsymbol\Omega$] sample space of feasible $(P,Q)$ pairs
  \item[$\mathcal D$] set of admissible prior distributions satisfying the assessor's PK constraints\kizito{correction for camera ready}
  \item[$S_i$ or $S_1,S_2$] regions in the partition of $\Omega$ induced by the \textbf{PK}s

  \item[$n$] number of observed classifications
  \item[$k_1$] number of observed FPs
  \item[$k_2$] number of observed FNs
  \item[$k$] total number of observed failures
  \item[$n_1$] number of successes before first failure
  \item[$n_2$] number of successes needed after first failure

  \item[$a$] assessor's prior confidence that the classifier meets the engineering target
  \item[$b$] claimed upper bound on \emph{pfc}; %equivalently $1-b$ is the minimum required accuracy
  \item[$b_1$] engineering target lower bound on classifier accuracy
  %\item[$1-b_1$] Engineering target upper bound on \emph{pfc}
  \item[$l$] lower confidence--bound on \emph{pfc}
  \item[$l_1$] practical lower limit on $P$
  \item[$l_2$] practical lower limit on $Q$

  %\item[$L(n,k_1,k_2;p,q)$] likelihood for typed FP/FN failure data
  %\item[$L(n,k;p,q)$] likelihood for untyped/coarse failure-count data
  \item[$L_i$] likelihood value at $(p_i,q_i)\in S_i$
  \item[$L_i^*, L_{i*}$] extrema of $L_i$ over the relevant regions of $S_i$
  \item[$p_i,q_i$] support-point coordinates of a worst-case prior
  \item[$\Phi^*$] conservative posterior-confidence infimum / worst-case posterior confidence
  %\item[$1_{\mathrm{pr}}$] indicator function for predicate $\mathrm{pr}$

  %\item[$\theta,p,q$] initial probabilities of success, FP, and FN in the stationary Markov model
  \item[${\widetilde \theta}_j,{\widetilde p}_j,{\widetilde q}_j$] transition probabilities to success, FP, and FN from state $j$, $j=1,2,3$
  \item[$\eta$] confidence/weight assigned to the i.i.d. assumption in the Markov-model sensitivity analysis
\end{description}
}

%\IEEEraisesectionheading{
\section{Introduction}
%}
\label{sec_intro}

% \section{Introduction}
% \label{sec_intro}
\IEEEPARstart{S}{oftware} is required to be very reliable if its purpose is to take actions towards ensuring safety: sufficiently reliable software will take correct actions when needed, sufficiently often. When assessing such software, an assessor can use statistical inference to draw conclusions about whether the software is sufficiently reliable. The assessor does this by applying a statistical model (of the software's failure behavior) to data gathered from the software's operational past. This model must be appropriate---in particular, the model's fidelity must be compatible with the level of detail in the operational data. For example, for on-demand systems, the data can be an enumeration of the outcomes of the software's past actions, indicating whether an action was correctly taken or the software failed. An appropriate statistical model to use with such data might be a parametric family of generalized \emph{Bernoulli random processes}, if the software's failure behavior can be argued to be adequately modeled by such processes. 

Ideally, data from past software operation is gathered for a \emph{particular} assessment that uses a \emph{particular} statistical model. Realistically, this is often not the case. Gathering the data at the required fidelity might be too expensive (e.g. gathering the data requires an infeasible amount of human judgment, time, or is unaffordable in monetary terms), or the gathered data was intended for a different statistical model, or for a different purpose altogether, or the desired fidelity is unobtainable.

Consequently, the data granularity can vary significantly, depending on the circumstances surrounding the data collection. The data might be ``coarse--grained''; for instance, it only indicates when successes/failures of the software occurred, but not the nature or type of each success/failure. If the software were, say, a binary classifier that should raise an alarm when unsafe driving conditions occur while traveling in an \emph{autonomous vehicle} (AV), it would be important to know \emph{what} failure mode occurred in addition to knowing that a failure \emph{has} occurred; for instance, whether the failure was a \emph{false positive} (FP) or \emph{false negative} (FN) classification of the driving event. By omitting such details, coarse--grained reliability data can leave an assessor with some uncertainty about the software's \emph{past} failure behavior; uncertainty that should be explicitly incorporated into any reliability claims (based on this data) about the software's \emph{future} failure behavior. An assessor's uncertainty about the software's operational past should temper any claims of the software being reliable.  

What is the impact---of using operational data with missing detail and the uncertainty this brings---in reliability assessments? When is there little impact: reliability claims are as trustworthy as claims made \emph{had} the data been more detailed? When is the impact significant: claims are dangerously optimistic, leading to misplaced trust in unreliable software?

\begin{figure*}[t!]
  \centering
  % You can usually remove resizebox once spacing is fixed; keep if needed.
  \resizebox{0.85\linewidth}{!}{%
  \begin{tikzpicture}[
    font=\large,
    node distance=14mm and 14mm, % more breathing room
    block/.style={
      draw, rounded corners, align=center,
      minimum height=12mm, text width=30mm
    },
    decision/.style={
      draw, rounded corners, align=center,
      minimum height=12mm, text width=34mm
    },
    binary/.style={fill=gray!20},
    wideblock/.style={
      draw, rounded corners, align=center,
      minimum height=14mm, text width=96mm
    },
    note/.style={align=left, font=\large, text width=32mm},
    arrow/.style={-{Latex[length=4mm, width=3.5mm]}, line width=1.9pt},
feed/.style={-{Latex[length=3mm, width=2mm]}, line width=1pt},
    %arrow/.style={-Latex, thick},
    %feed/.style={-Latex, thick},
    box/.style={draw, dashed, rounded corners, inner sep=7pt}
  ]

  % ------------------------------------------------------------
  % Nominal driving channel (CCDSS / "Doer")
  % ------------------------------------------------------------
  \node[block,binary] (sensors) {Sensors\\\normalsize camera/lidar/radar\\GNSS/IMU, map};
  \node[block, right=of sensors] (perception) {Perception\\\normalsize detect \& classify};
  \node[block, right=of perception] (prediction) {Prediction\\\normalsize intent \& motion};
  \node[block,binary, right=of prediction] (planning) {Planning\\\normalsize behavior/trajectory};

  % ------------------------------------------------------------
  % Fault-tolerant decision / selector (FTDSS)
  % ------------------------------------------------------------
  \node[decision, binary, right=25mm of planning] (ftdss) {Decision / Selector};

  % ------------------------------------------------------------
  % Control & actuation
  % ------------------------------------------------------------
  \node[block, binary, right=26mm of ftdss] (control) {Control \&\\Actuation};

  % Nominal flow (shorter labels to prevent collisions)
  \draw[arrow] (sensors) -- (perception);
  \draw[arrow] (perception) -- (prediction);
  \draw[arrow] (prediction) -- (planning);
  \draw[arrow] (planning) -- node[midway, above, font=\large, align=center]{nominal\\ setpoints} (ftdss);
  \draw[arrow] (ftdss) -- node[midway, above, font=\large, align=center]{selected\\ setpoints} (control);

  % Callouts: moved higher with explicit text width so they wrap cleanly
  \node[note, above=12mm of perception] {\textbf{Examples:}\\obj / lane / traffic light classifiers};
  \node[note, above=12mm of prediction] {\textbf{Examples:}\\intent / cut-in classifiers};
  \node[note, above=12mm of planning] {\textbf{Examples:}\\maneuver selection risk scoring};

  % ------------------------------------------------------------
  % Monitoring / Safety assurance channel (MSS / "Checker")
  % Place it lower and slightly centered under prediction/planning
  % ------------------------------------------------------------
  \node[wideblock, below=26mm of prediction, xshift=18mm] (mss)
    {Monitoring / safety assurance\\
     %\scriptsize ODD monitor; anomaly detection; 
     \normalsize \emph{operational design domain} (ODD) violation classifier; \\anomaly detector; action approval classifier;
     consistency checks;\\
     safety envelope / RSS-style checks; trajectory \& setpoint validation};

  % ------------------------------------------------------------
  % Critical event handling / fallback channel (CEHSS)
  % Place it under the selector so arrows are vertical / non-crossing
  % ------------------------------------------------------------
  \node[block, binary, below=28mm of ftdss] (cehss)
    {Fallback / \emph{minimal risk maneuver};\\\normalsize \emph{critical--event handling support system} (CEHSS);\\safe stop / pull-over};

  % Fallback proposes setpoints to selector (clean vertical path)
  \draw[arrow] (cehss.north) -- node[midway, right, font=\large]{fallback setpoints} (ftdss.south);

  % ------------------------------------------------------------
  % What the monitor observes (feeds into MSS)
  % Route feeds to distinct anchor points to avoid stacking
  % ------------------------------------------------------------
  \draw[feed] (sensors.south)   |- (mss.west);
  \draw[feed] (perception.south) |- (mss.west);
  \draw[feed] (prediction.south)  -- ([xshift=-18mm]mss.north);
  \draw[feed] (planning.south)  -- ([xshift=28.75mm]mss.north);

  % Monitor observes fallback proposal too (route from left to avoid clutter)
  \draw[feed] (cehss.west) -- ++(-6mm,0) |- (mss.east);

  % ------------------------------------------------------------
  % What the monitor outputs
  % ------------------------------------------------------------
  % ValidationResult to selector: route around the fallback arrow so it doesn't overlap
  \draw[arrow] (mss.south) -- ++(0,-10mm) -- ++(55mm,0)
    |- node[pos=0.35, right, font=\large, align=center]{Validation\\ Result} (ftdss.west);

  % Critical event / ODD exit triggers fallback (clean route)
  \draw[arrow] (mss.south) -- ++(0,-25mm)
    -| node[pos=0.25, below, font=\large]{CriticalEvent / ODD exit} (cehss.south);

  % ------------------------------------------------------------
  % Grouping boxes + labels (labels above boxes to prevent overlap)
  % ------------------------------------------------------------
  \node[box, fit=(sensors)(perception)(prediction)(planning),
        label={[font=\large]above:{Core Nominal Driving channel (``Doer'' channel)}}] (ccdssbox) {};
  \node[box, fit=(mss),
        label={[font=\large, align=center]above:{Monitoring channel\\ (``Checker'' channel)}}] (mssbox) {};
  \node[box, fit=(cehss),
        label={[font=\large]above:{Fallback channel}}] (cehssbox) {};
  \node[box, fit=(ftdss),
        label={[font=\large, align=center]above:{Fault-tolerant\\ decision support system\\ (FTDSS)}}] (ftdssbox) {};

  \end{tikzpicture}
  }
  \caption{An AV autonomy stack schematic indicating typical stages in the stack, applications of classifiers (i.e. white boxes) with a focus on the monitoring channel, and how a channel--wise Doer/Checker/Fallback safety architecture can gate actuation.}
\label{fig:av_pipeline_classifiers}
\end{figure*}

This paper presents statistical techniques that conservatively estimate this impact. These novel extensions of \emph{conservative Bayesian inference} (CBI) support evidence--based, conservative, reliability claims. They provide a means to conservatively check the robustness of reliability claims based on insufficiently detailed data. The paper's research contributions are:
%\begin{enumerate}[wide,label={\arabic*)}]
\begin{enumerate}[label={\arabic*)}]
\item formally incorporating assessor uncertainty about failure data details into reliability assessments (see Section~\ref{sec_methodology});

%\item  sensitivity analyses that shows how assessments based on the i.i.d. assumption can be extremely optimistic. Such analyses can reveal ranges of values for a statistical model's parameters over which optimism occurs (see section \ref{sec_results});
   
\item novel CBI techniques that account for this uncertainty and quantify the impact this has on the conservative reliability claims an assessor can justify (see Sections~\ref{sec_methodology}--\ref{sec_discussion});
    
\item  four theorems, with and without failure data uncertainty, that give conservative posterior confidence bounds on a system's probability of failure  (see Section~\ref{sec_results}).

\item demonstrating how attempts at conservatively using coarse operational data in assessments can lead to optimistic claims (see Sections~\ref{sec_applications}, \ref{sec_discussion});

\item guidance for statistical AV--safety assurance under operational--data logging constraints (see Section~\ref{sec_discussion}).

\end{enumerate}
The paper's outline is: Section~\ref{sec_relwork} highlights related work, Section~\ref{sec_practicalscenario} describes the reference AV scenario, and Section~\ref{sec_methodology} details the CBI approach. Section~\ref{sec_results} presents worst-case confidence bounds on classifier reliability under different degrees of uncertainty about the failure-types exhibited by the classifier during operation. The confidence bounds are applied in Section~\ref{sec_applications}. Section~\ref{sec_discussion} discusses practical considerations and implications, with concluding remarks in Section~\ref{sec_conc}.

\section{Related Work}
\label{sec_relwork}  
\subsection{AV Safety Standards}
The functional--safety baseline for production road vehicles is the ISO~26262 series (second edition, 2018). The standard targets hazards due to malfunctioning safety--related electrical/electronic subsystems and prescribes a risk--based safety lifecycle from conception through validation \cite{ISO26262ISONews,Debouk2019ISO26262Overview}. Crucially, for software--intensive AV stacks (e.g. Fig.~\ref{fig:av_pipeline_classifiers}), ISO~26262 goes beyond system--level process guidance: it explicitly requires artifacts and evidence at the \emph{software--component level}, including specification of software safety requirements, software unit verification, software integration and verification, and testing of embedded software \cite{ISO2626262018}. It is reinforced by supporting process requirements for verification planning, change/configuration management, confidence in software tools and pre-existing software components (including ``proven--in--use'' style arguments); these emphasize explicit assessment/justification of safety--relevant software elements \cite{ISO26262-8-2018}. ISO~26262 uses \emph{hazard analysis and risk assessment} (HARA) to rate hazardous events by severity, exposure, controllability, and to assign an \emph{Automotive Safety Integrity Level} (ASIL) \cite{Debouk2019ISO26262Overview}. The ASIL rating sets the level of rigor and independence for development and assurance activities \cite{ChonnadLitovtchenkoSargsyan2023ConfirmationMeasuresISO26262}.

ISO~26262 is fault--oriented. Complementary standards address AV--relevant risks not directly attributable to component faults. ISO~21448 (SOTIF) provides an argument framework for avoiding unreasonable risk due to functional insufficiencies, and explicitly calls for design, verification and validation measures (and operational--phase activities) for automated--driving and emergency--intervention functions that depend on complex sensors and processing algorithms \cite{ISO21448-2022}. ISO/PAS~8800 provides a framework/guidance for building convincing safety assurance claims for \emph{artificial-intelligence} (AI)-based safety-related systems within the vehicle. ISO/PAS~8800 tailors and extends ISO~26262 and ISO~21448 for AI elements \cite{ISOPAS8800-2024}. Finally, ANSI/UL~4600 offers a deployment--facing structure for scrutinizing evidence used to justify safety claims for autonomy--relevant subsystems \cite{UL4600-Ed2-2022}.

While not a standard, \emph{Assurance~2.0} argues that autonomous systems assurance must be more rigorous, evidence--centered, continuous/incremental, with  explicit treatment of defeaters and probabilistic treatments of evidence \cite{BloomfieldRushby2020A2Manifesto,BloomfieldRushby2022A2Confidence,ChenEtAl2025A2SDV}.

\subsection{(Conservative) Bayesian Methods for Assessments}
\label{subsec_cbiliterature}
CBI supports conservative assessments of software reliability using operational data and evidence-based beliefs about the software's propensity to fail/succeed \cite{BishopEtAl2011TSE}. This includes support for claims about fault--freeness/perfection and very small failure probabilities, using failure--free operational evidence and experience from previous similar systems \cite{StriginiPovyakalo2013SAFECOMP,littlewood_reliability_2020,salako_conservative_2021}. Preliminary work applied CBI to binary classifier assessment \cite{Salako_QEST_2020}, and to highlighting circumstances where demonstrations of desirable levels of reliability from AV operational testing are infeasible \cite{zhao_assessing_2019,zhao_assessing_2020}. CBI has been used to estimate the impact of erroneous modeling assumptions on assessments, such as incorrectly assuming statistically independent test outcomes during software testing \cite{SalakoZhao2023TSE,SalakoZhao2024QRE}. Bishop \emph{et al.} \cite{BishopPovyakaloStrigini2022Bootstrapping} formalize ``confidence bootstrapping'' for AV assessment: staged deployment policies are coupled to CBI-derived lower bounds on failure--free operation over a finite future ``confidence horizon'', so that risk can be contained early while evidence from safe operation tightens conservative predictions over time. Aghazadeh-Chakherlou \emph{et al.} \cite{ChakherlouSalakoStrigini2022ArguingSafety} address the problem that AVs evolve, extending previous CBI results for arguing post--change safety from pre--change safe operation under strong improvement assumptions, while \cite{AghazadehChakherlouStrigini2024PreChange} incorporates both pre--change operational evidence and partial prior confidence in fault-freeness, showing that earlier operation can materially improve post--change dependability claims when post--change data are scarce. \cite{BishopPovyakaloStrigini2025Doubt} argues that extreme safety claims should explicitly account for inevitable epistemic doubt in assurance arguments, and show how conservative bounds during early operation can be strengthened by ``fall--back'' arguments. 

CBI aligns with robust Bayesian and imprecise-probability frameworks, which favor inferences that remain valid under plausible  uncertainty about the inputs to inference \cite{Berger1990RobustBayesSensitivity,BergerBerliner1986AoS,Walley1991ImpreciseProbabilities,moreno1991robust,berger1994_robustBayesianOverview,SalakoMuhammad2025}. %
%In \cite{} CBI is contextualized within these wider frameworks.

%CBI can also quantify the consequences of \emph{statistical model misspecification} induced by uncertainty about missing, important, operational-data details. When there is no such uncertainty, under appropriate regularity conditions, standard Bayesian posteriors concentrate around the \emph{Kullback--Leibler} (KL) projection of the data-generating distribution when the likelihood, or the prior, are misspecified \cite{kleijn2006misspecification}. In the present paper, the dominant uncertainty is about the prior and the data: low-fidelity logging of AV operational-data effectively ``\emph{coarsens}'' the observations (e.g. collapsing FP/FN types) and the likelihood. This aligns with the ``coarse data'' / ``coarsening at random'' literature, which emphasizes how inferential strength can change drastically depending on what is observed and the coarsening mechanism \cite{heitjan1991ignorability,gill1997coarsening}. From this perspective, CBI plays an analogous role to \emph{set-identified} inference that bounds posterior confidence over so-called \emph{credal sets}---sets of priors consistent with the available evidence \cite{Cozman2000,CozmanRochaCampos2002ComputingSets}.
\subsection{Some AV Safety Assessment Approaches}
The infeasibility of demonstrating AV reliability from road testing alone has been argued using classical frequentist statistics \cite{KalraPaddock2016}. Multivariate Bayesian inference (that explicitly models uncertainty about AV operating modes) has been proposed to assess AV safety using fleet operational data \cite{Popov2025DyAVSA}. Scenario-based statistical testing involves sampling representative and/or challenging traffic scenarios that AVs may encounter \cite{RiedmaierEtAl2020ScenarioSurvey}; however, Zhao et al. argue for such testing to have a stronger statistical foundation for stopping rules, residual-risk estimation, and claims about simulation fidelity \cite{ZhaoEtAl2025StatFoundation}. Importance sampling and cross-entropy methods accelerate exposure to rare, safety-critical scenarios \cite{ZhaoEtAl2017LaneChangeIS} and have been applied in end-to-end AV testing \cite{OKellyEtAl2018RareEvent}. These approaches typically require fully specified priors, high-fidelity typed data, or validated stochastic/simulation models. The present paper addresses a complementary problem: when operational logs are too coarse, what conservative claims remain justified?

\subsection{Some Binary Classifier Assessment Approaches}
\label{subsec_roc}
Binary classifier assessment estimates Bernoulli error rates (e.g. FN and FP probabilities) from operational data. Empirical evaluation typically involves summary statistics of a classifier's failure behavior; statistics including accuracy, sensitivity (\emph{true positive rate} (TPR)/recall), specificity (\emph{true negative rate} (TNR)), precision (positive predictive value), and $F_1$ score (harmonic mean of precision and recall) \cite{AltmanBland1994,SokolovaLapalme2009,DavisGoadrich2006}. Performance is frequently reported across thresholds via \emph{receiver operating characteristic} (ROC) curves (TPR \emph{vs.} \emph{false positive rate} FPR) \cite{Fawcett2006} and the associated \emph{area} \emph{under} \emph{the} \emph{curve} (AUC) \cite{HanleyMcNeil1982}. Confusion matrices reveal trade-offs between failure rates and the types/frequency of failures/successes at different decision thresholds. However, ROC/AUC can be misleading under strong class imbalance, motivating complementary \emph{precision--recall} (PR) curves and PR-focused summaries \cite{SaitoRehmsmeier2015,DavisGoadrich2006}. 

Bayesian ROC methods propagate observational and distributional uncertainty into ROC/AUC trade-offs. A substantial literature treats ROC curves as \emph{random} objects induced by prior distributions over class-conditional score distributions. %; this yields posterior uncertainty bands for ROC/AUC analyses. 
Also, semi/non-parametric Bayesian constructions can be more robust than binormal-style assumptions while still supporting principled threshold selection and credible intervals \cite{Erkanli2006,Branscum2008,Gu2008}. Bayesian ROC also helps with ``ground truth'' uncertainty \cite{Choi2006}. %Hierarchical Bayesian extensions further address dependence/heterogeneity that arise with clustered observations or multi-reader study designs \cite{OMalleyZou2006,JohnsonJohnson2006}, and modern models allow ROC performance to vary with covariates rather than assuming a single global curve \cite{RodriguezMartinez2014}. %There are also hierarchical ROC frameworks \cite{RutterGatsonis2001}.

\section{Reference AV Scenario}
\label{sec_practicalscenario}
\subsection{From AV Driving Episodes to Operational Evidence}
Fig.~\ref{fig:av_pipeline_classifiers} exemplifies a recursive AV driving pipeline. During a journey, the AV repeatedly senses, localizes, perceives, predicts, plans, checks, selects, controls, actuates, and then observes the consequences of its action. This perception--planning--control decomposition follows traditional AV software tasks \cite{Pendleton2017Machines}. The planning and decision-making context, including safety and verification concerns, is consistent with \cite{Schwarting2018AR}. The Doer/Checker/Fallback architecture follows \cite{TheAutonomousWG2023SafetyArch}. 
%The explicit monitoring/checking channel reflects autonomy-stack schemes in which planning and control are supervised by verification or safety-assurance mechanisms \cite{Schwarting2018AR}. 
A typical \emph{machine}-\emph{learning} (ML)-based nominal channel consumes camera, lidar, radar, \emph{global} \emph{navigation} \emph{satellite system} (GNSS), \emph{inertial} \emph{measurement} \emph{unit} (IMU), and map inputs; constructs fused scene representations; detects and classifies objects, lanes, drivable space, and traffic lights; predicts agent intent and motion; generates behavior or trajectory candidates; and produces nominal setpoints. In parallel, the checker channel evaluates the nominal channel's intermediate and final outputs. The Doer/Checker/Fallback structure in Fig.~\ref{fig:av_pipeline_classifiers} captures this arrangement: a monitoring subsystem and fault-tolerant selector can approve nominal setpoints, reject them, or switch to a minimal-risk fallback channel \cite{TheAutonomousWG2023SafetyArch}. %This is consistent with planning stacks that incorporate trajectory validation \cite{AutowarePlanningDesign}, modular stacks that publish predictions to planning \cite{AutowareAutoPrediction}, and industrial architectures using redundant sensing/compute and monitoring/listener concepts \cite{ShalevShwartz2024SDS}.
On each journey, this pipeline induces a sequence of operational episodes; e.g. an AV journey may often proceed through episodes such as:
\begin{quote}
\small
\begin{list}{$\rightarrow$}{%
  \setlength{\leftmargin}{1.7em}
  \setlength{\labelwidth}{1em}
  \setlength{\labelsep}{0.5em}
  \setlength{\itemsep}{0pt}
  \setlength{\parsep}{0pt}
  \setlength{\topsep}{0.2em}
}
\item[]\emph{automated driving system} (ADS) available?
\item in \emph{operational design domain} (ODD)?
\item lane estimate reliable?
\item lead gap safe?
\item target lane-change gap acceptable?
\item proceed through signal?
\item trajectory approved?
\item fallback required?
\end{list}
\end{quote}
%Only some gates are active in a given operational mode: a cut-in classifier is relevant during lane-change or merge episodes, a dilemma-zone monitor during signalized-intersection approaches, and a fallback classifier during degraded or ODD-exit operation. 
For the statistical model in Section~\ref{sec_methodology}, a ``classification'' is a triggered decision by a specified classifier, in a specified ODD and operational-mode slice, with fixed software version, thresholds, sensor configuration, and ground-truth rule. An operational log row might record:
\begin{quote}
\small
\begin{list}{}{%
  \setlength{\leftmargin}{7em}
  \setlength{\labelwidth}{6em}
  \setlength{\labelsep}{0.5em}
  \setlength{\itemsep}{0pt}
  \setlength{\parsep}{0pt}
  \setlength{\topsep}{0pt}
  \renewcommand{\makelabel}[1]{\makebox[\labelwidth][l]{#1}}
}
\item[event]      = \texttt{intersection\_approach}
\item[mode]       = \texttt{signalized}
\item[ODD]        = \texttt{urban/day/dry}
\item[speed]      = $11.8\mathrm{m/s}$
\item[stopline]   = $42\mathrm{m}$
\item[signal]     = \texttt{amber}
\item[candidate]  = \texttt{proceed}
\item[classifier] = $C_{\mathrm{traj}}$
\item[score]      = $0.83$
\item[output]     = \texttt{intervene}
\item[truth]      = \texttt{unsafe\_to\_proceed}
\item[outcome]    = \texttt{correct}
\end{list}
\end{quote}
From such information and counterfactual reasoning, the success/failure data required by our statistical model is obtained.

%This demand definition is also how the i.i.d. approximation is made plausible. 
%Operational datasets may instead record only coarse events such as disengagements or safety-related interventions aggregated over distance or time \cite{boggs2020disengagements}. Recovering typed failure data may require retaining intermediate states, scores, thresholds, candidate trajectories, selector decisions, and fallback outcomes, which can be costly in bandwidth, storage, privacy, and validation effort \cite{TheAutonomousWG2023SafetyArch}.

\subsection{Binary Classification in AV Safety}

The classifier instances in Fig.~\ref{fig:av_pipeline_classifiers} are not limited to low-level perception labels. They include binary safety-relevant decisions embedded throughout the recursive driving process. A classifier $C_j$ is invoked on a demand input, consisting of raw or intermediate AV state information, and outputs one of two labels: for example, \emph{safe}/\emph{unsafe}, \emph{in-ODD}/\emph{out-of-ODD}, \emph{allow}/\emph{intervene}, \emph{approve}/\emph{reject}, or \emph{nominal}/\emph{fallback}. Terrosi \emph{et al.} explicitly characterize an automated-driving safety monitor that detects hazardous situations and triggers emergency braking as an ``extended binary classifier'' that classifies the system state as safe or unsafe and acts accordingly \cite{TerrosiStriginiBondavalli2022ImpactMLSafetyMonitors}. Reachability-based safety supervision similarly produces an activation signal indicating that the state has been deemed unsafe and that evasive action, such as braking, should be initiated \cite{KojchevKlintbergFredriksson2020SafetyMonitoringConcept}. At the perception--action boundary, an \emph{autonomous} \emph{emergency} \emph{braking} (AEB) function can be abstracted as a binary decision component---no action \emph{vs.} emergency braking \cite{YuhasEaswaran2023CoDesignOODAEBS}. A runtime monitor that approves safe actions and blocks or replaces unsafe actions implements the same binary logic \cite{JacksonRichmondWangChowGuajardoKongCamposLittArechiga2021CertifiedControl}.

For our statistical model, fix the classifier $C_j$, the operational mode, and the interpretation of the positive label. If positive means \emph{unsafe/intervene}, then an FP is an unnecessary intervention when the proposed action was actually safe, while an FN is an unsafe allowance when the proposed action should have been blocked. At each relevant ODD check, an ODD monitor outputs \emph{in-ODD}/\emph{out-of-ODD}. %, consistent with the Doer/Checker/Fallback discussion in \cite{TheAutonomousWG2023SafetyArch}.
An anomaly or \emph{out-of-distribution} (OOD) detector outputs \emph{normal}/\emph{OOD}.  %\cite{YuhasEaswaran2023CoDesignOODAEBS}. 
An action-approval classifier outputs \emph{OK}/\emph{not-OK} when the nominal channel proposes a trajectory or setpoint for execution. A \emph{not-OK} inhibits the proposal, after which the selector chooses an alternative (e.g. minimal-risk maneuver, controlled stop) \cite{TheAutonomousWG2023SafetyArch}. %Likewise, RSS-style checks determine whether the current state is dangerous and constrains the driving policy via a prescribed response \cite{ShalevShwartz2024SDS}, while Autoware's planning architecture includes a validation component that verifies trajectory safety \cite{AutowarePlanningDesign}.

So, a single AV journey generates several datasets of classifier outcomes on classification tasks: e.g. during ODD monitoring, lane-change attempts, intersection approaches, proposed trajectory validation, and degraded operation. In principle, these datasets yield success/failure counts for our model, after each binary classifier output has been compared with counterfactual ground truth and classified as a success, FP, or FN, \emph{ex post facto}. Thus, the Fig.~\ref{fig:av_pipeline_classifiers} pipeline provides enough data to classify FPs/FNs, but in practice this requires reliable ground truth and counterfactual reasoning about what would have occurred had the monitor not intervened \cite{koopman2016autonomous,TerrosiStriginiBondavalli2022ImpactMLSafetyMonitors}.\kizito{correction for camera ready}
 %If the assessor observes $n$ of these classifications with $k_1$ FPs and $k_2$ FNs, the typed-data theorems apply. If only $k=k_1+k_2$ failures are logged, the coarse-data theorems apply. The data-fidelity issue studied in this paper is precisely the difference between these two logging regimes.

\section{Methodology}
\label{sec_methodology}
The following statistical model is used in Section~\ref{sec_results} to show the impact of data-fidelity on assessments. Unlike continuous-time software reliability models with failure events in continuous execution/calendar time (e.g. \cite{GosevaPopstojanovaTrivedi2000,GoelOkumoto1979}), the modeling framework adopted here is discrete and ``demand''--indexed (\emph{cf.} \cite{zhao_assessing_2019,littlewood_validation_1993}). This distinction aligns naturally with system type: continuous-time models are well suited to continuously operating systems (e.g. transaction-processing or telecommunications software), whereas safety-critical protection or decision functions---such as on-demand monitors or AV safety classifiers---are more appropriately modeled as discrete-demand systems with reliability defined per actuation/decision.

\subsection{A Bayesian Model of Binary Classification}
\label{subsec_binaryclass_statsmodel}
A binary classifier receives a sequence of \emph{s-independent and identically distributed} (i.i.d.) samples for classification. To be consistent with the i.i.d. assumption, data collection should sample event-level demands---e.g. one lane-change-gap decision, one intersection-approach decision, or one trajectory-approval decision per triggered episode---from a stable operational profile.\kizito{Check en-dash in camera ready} AV operational modes matter because mixing lane keeping, intersections, merging, degraded perception, and fallback episodes may mix different FP/FN probabilities. Even with careful sampling, the approximation is imperfect: closed-loop interventions alter subsequent traffic states, and real-world logs are temporally and logically coupled\footnote{This i.i.d. assumption should be understood to hold approximately, at best. The impact of this assumption on a given assessment's claims can be evaluated using CBI/robust inference techniques---see Section~\ref{sec_discussion}.}. 

The classifier labels each sample as either belonging to class ``1'' or class ``0''. There are 4 possible outcomes on each sample: the classifier can succeed in two ways (i.e. correctly classifying the sample as belonging to class ``$1$'' or ``$0$'') or fail in two ways (i.e. an FP failure, where a ``$0$'' sample is misclassified as ``$1$'', or an FN failure, where a ``$1$'' sample is misclassified as ``$0$''). Ignoring differences in success types\footnote{Admittedly, this simplification makes the analyses somewhat failure-centric, which is understandable when the concern is safety. The presented techniques and analyses can be extended to cater for multiple success types.}, only 3 outcomes are possible---either success or one of two failure types. Which of these 3 outcomes will occur is not known before the classifier receives the sample. So, there are 3 probabilities an assessor would like to know: the probability $\Theta$ that the software succeeds, the probability $P$ of an FP failure, and the probability $Q$ of an FN failure. By definition, $\Theta+P+Q=1$, so the assessor's uncertainty reduces to not knowing the true values of only $P$, $Q$. The quantity ``$P+Q$'' is the unknown \emph{probability of failure per classification} (\emph{pfc}). All of this defines an unknown \emph{categorical random process}---parameterized by $(P,Q)$---that models the occurrence of the classifier's successes/failures. The probabilities $P$, $Q$, $\Theta$ are unconditional; \emph{cf.} conditional FP/FN rates in ROC analyses.

Prior to observing the classifier in operation, the assessor forms beliefs---represented as some probability distribution $\mathbb P$---about how likely various values of $(P, Q)$ are. These beliefs are based on reliability evidence gathered before observing the classifier in operation (see section~\ref{subsec_practicalcontext}). Then, upon observing the classifier in operation---e.g. on $n$ classification samples, observing that the classifier made $k_1$ FPs, $k_2$ FNs, and $n-k_1-k_2$ correct classifications\footnote{The assessor notes each  outcome on each sample in the sequence.}---an assessor updates their (prior) beliefs via Bayesian inference. As a consequence, this operational evidence can sway the assessor into being more/less skeptical about, say, the classifier having very good accuracy. That is, given the assessor's beliefs $\mathbb P$, and given the classifier failed on $k_1$ and $k_2$ samples, the assessor's doubt in the classifier's accuracy being good enough is
\begin{equation}
{\mathbb P}(\Theta< 1-b\mid n\mbox{ tasks, }k_1\mbox{ FPs, }k_2\mbox{ FNs})
\label{eqn_upperconfbnd_accuracy}
\end{equation}
for some sufficiently small $b\in[0,1]$ such that $1-b$ is the minimum accuracy required of the classifier.

Priors can often be difficult to specify or justify fully; e.g. for the assessment scenarios presented in this paper, where the prior $\mathbb P$ is a 2-dimensional joint distribution over feasible $(P,Q)$ values. A pragmatic alternative is the assessor specifies only those aspects of $\mathbb P$ that available evidence justifies. This often means that only a \textit{partial specification} of the prior is provided; e.g. specifying 90\% confidence in the classifier's accuracy being greater than $0.999$. Formally, a partial specification of the prior defines a set $\mathcal D$ of all prior distributions that, each, satisfy the assessor's beliefs. Restricting $\mathcal D$ to priors with well-defined posterior probabilities, one can determine the worst support the priors in $\mathcal D$ can give for a reliability claim, by finding the worst-value these priors can give for a posterior probability of interest. This is the CBI approach. For example, \eqref{eqn_upperconfbnd_accuracy} quantifies an assessor's doubt---in the classifier's accuracy being good enough---after observing the classifier in operation. However, a skeptical assessor might want to know the most doubt they can have, after seeing evidence of reliable operation. That is, rather than computing \eqref{eqn_upperconfbnd_accuracy} using a specific prior $\mathbb P$, the assessor seeks a prior that solves the following: 
\begin{align}
\label{eqn_abstract_CBI_problem}
    \sup\limits_{\mathcal D}{\mathbb P}(\Theta< 1-b\mid n\mbox{ tasks, }k_1\mbox{ FPs, }k_2\mbox{ FNs}).
 \end{align}
The solution to \eqref{eqn_abstract_CBI_problem} is conservative---\emph{the smallest amount of doubt an assessor can have (based on the operational evidence) that is at least as large as any doubt the assessor can justify}! A prior that gives this conservative posterior value is a ``worst-case'' prior. Under the categorical process, \eqref{eqn_abstract_CBI_problem} is
 \begin{align}
 \label{eqn_lessabstract_CBI_problem}
    &\sup\limits_{\mathcal D}\dfrac{\mathbb E[L(n,k_1,k_2;P,Q)\mathbf{1}_{P+Q> b}]}{\mathbb E[L(n,k_1,k_2;P,Q)]}
 \end{align}
 where $L(n,k_1,k_2;p,q):=p^{k_1}q^{k_2}(1-p-q)^{n-k{1}-k_{2}}$ is the likelihood and $\mathbf{1}_{\mathtt{pr}}$ is an indicator function---it equals $1$ when an observation satisfies predicate $\mathtt{pr}$, and is $0$ otherwise.

 If the assessor is told $k$ failures occurred, but not which of these are FPs or FNs, the categorical process becomes a \emph{Bernoulli} process and \eqref{eqn_abstract_CBI_problem} is replaced by
 \begin{align}
 \label{eqn_Bernoulli_CBI_problem}
    &\sup\limits_{\mathcal D}{\mathbb P}(\Theta< 1-b\mid n\mbox{ tasks, }k\mbox{ failures }) \nonumber\allowdisplaybreaks \\
    {}={}&\sup\limits_{\mathcal D}\dfrac{\mathbb E[L(n,k;P,Q)\mathbf{1}_{P+Q> b}]}{\mathbb E[L(n,k;P,Q)]}\allowdisplaybreaks
 \end{align}
 with likelihood $L(n,k;p,q):=(p+q)^k(1-p-q)^{n-k}$.

We focus on conservatively bounding \emph{pfc} (equivalently accuracy). In practice, multiple reliability--related metrics, e.g. \emph{sensitivity} and \emph{specificity}, are of concern. The impact of data fidelity is most easily appreciated by focusing on a single metric, say \emph{accuracy}, but the analyses presented in this paper can be extended to incorporate other metrics; see Section \ref{sec_discussion}.

\subsection{Prior Beliefs about Classifier Reliability}
\label{subsec_priorbeliefs}
What are some beliefs our assessor might justify? The assessor might express various degrees of confidence related to how reliable the classifier is expected to be. That is, in the assessor's informed opinion, the classifier's development process is expected to produce very reliable classifiers. Referring to these beliefs as \emph{prior knowledge} (\textbf{PK}s), we can state these more formally: for $a, b_1, l, l_1, l_2 \in [0,1]$,
\begin{constraint}
\label{PK1}
The classifier is imperfect:
$\mathbb P(P+Q\geqslant l)=1$, where the pfc, $P+Q$, cannot be smaller than $l$, given the limits of what is achievable with current software engineering methods, available training data and hardware.
\end{constraint}
%\begin{constraint}
%\label{PK2}
%Being very confident that FP errors are very unlikely: $\mathbb P(P\geqslant b_1)=a_1 $, for $a_1, b_1$ close to $0$;
%\end{constraint}
%\begin{constraint}
%\label{PK3}
% Being very confident that FN errors are very unlikely: $\mathbb P(Q\geqslant b_2)=a_2 $, for $a_2, b_2$ close to $0$;
%\end{constraint}
\begin{constraint}
\label{PK4}
 Being very confident that the classifier's accuracy is better than required: $\mathbb P(\Theta \geqslant b_1)=a$, for $a, b_1$ close to 1;
\end{constraint}
\begin{constraint}
\label{PK5}
A more restrictive version of \textbf{PK}\ref{PK1}:
$\mathbb P(P\geqslant l_1, Q\geqslant l_2)=1$, where $P,\,Q$ cannot be smaller than $l_1,\,l_2$ in practice and the smallest possible pfc is $l_1+l_2$. So, $l_1+l_2\leqslant 1-b_1$.
\end{constraint}

\subsection{Reliability--Target Constraints on Parameters}
\label{subsec_paramconstraints}
The values of $a, b_1, l, l_1, l_2$ are constrained by reliability targets that the software must meet. These values must be consistent with general guidance on supporting reliability claims using statistical/operational testing \cite{StriginiLittlewood1997}. Typically, $b_1$ is relatively large, because the development team aims to develop classifiers that exhibit high accuracy (i.e. $\Theta$). Relatedly, $a$ is relatively large because the assessor is very confident that the development team has produced a classifier that meets reliability targets. Since $1-b$ is the minimum accuracy required of the classifier, $0<1-b\leqslant b_1$. Moreover, by \textbf{PK}\ref{PK5} and the non-negativity of $l_1$ and $l_2$, $\max\{l_1,l_2\}\leqslant \min\{b, 1-b_1\}$. %Of course, \textbf{PK}\ref{PK4} implies $\mathbb P(P+Q \leqslant 1-b_1)=a$.

The assessor also has realistic  expectations about how good the classifier \emph{can} be. For non-trivial classification problems, a classifier employing ML algorithms is likely to \emph{not} have $100\%$ accuracy, primarily due to the imprecision with which the ML algorithm approximates the unknown target classification function, and due to training the algorithm on only a sample of the data to be classified. Thus, the development team cannot produce ML--based software with $P, Q$ smaller than some very small $l_1, l_2$.

% The following two practical considerations place further restrictions on the model parameters\footnote{}. 
 
 %The classifier's developers may have a target value of $1-b_1$, for the probability of \emph{any type of} failure, that is no bigger than each target value $b_1, b_2$, for the probabilities of an FP or FN, respectively. 
 %So, formally, the events $[P\geqslant b_1]$ and $[Q \geqslant b_2]$ are both contained within the event $[P+Q\geqslant 1-b_1]$. As a consequence, the probabilities of these events must also be ordered. That is, the assessor must be more doubtful of the probability of failing a task being acceptably small, than they are doubtful of each failure-type probability being acceptably small: that is, $1-a\geqslant \max\{a_1,a_2\}$ necessarily. 
 
 Operational evidence is typically gathered over an observation period for which the \emph{maximum likelihood estimates} of $P$, $Q$, i.e. $\frac{k_1}{n}$ and $\frac{k_2}{n}$, are larger than the target-\emph{pfc} $1-b_1$.

% in your preamble (or before the figure)
\newcommand{\omegaFigScale}{0.85} % change this to resize the figure

\begin{figure}[t!]
    \centering
    \begin{tikzpicture}[scale=\omegaFigScale,
                        every node/.style={scale=\omegaFigScale}] % nodes scale too
        \begin{scope}
            %vertices of sample space triangle with non-zero probability
            \coordinate (r0) at (0.05,-1.25);
            \coordinate (s0) at (0.05,3);
            \coordinate (si) at (5,-2.2);
            \coordinate (ri) at (1,-2.2);
            \fill[fill=gray!20] (r0) -- (s0) -- (si) -- (ri) -- cycle;

            %the b line
            \draw[line width=0.05cm,white] (0,0.25) -- (2.45,-2.2);

            %the 1-b3 line
            \draw[line width=0.05cm,white] (0,-1.25) -- (1,-2.2);
                
            \draw[-{Latex[length=3mm]},line width=0.02cm] (-1,-2.25) -- (5.85,-2.25); %draw horizontal axis
            \node[anchor=north,scale=1.2] (x_axis_label) at (6,-2.15) {$\pmb{p}$}; %Label horizontal axis

            %axes labels
            \node[anchor=north,scale=0.8] (N_label_2) at (2.65,-2.35) {$\pmb{1-b_1}$};
            \node[anchor=north,scale=0.8] (N_label_4) at (1.25,-2.35) {$\pmb{l}$};
            \node[anchor=north,scale=1.0] (N_label_5) at (5.2,-2.2) {$\pmb{1}$};
            \node[anchor=north,scale=0.8] (N_label_2) at (-0.5,0.55) {$\pmb{1-b_1}$};
            \node[anchor=north,scale=0.8] (N_label_2) at (-0.25,-0.9) {$\pmb{l}$};
            \node[anchor=north,scale=0.9] (N_label_5) at (-0.25,3.3) {$\pmb{1}$};

            \node[anchor=north east,scale=1] (x_axis_label) at (0.05,-2.22) {$\pmb{0}$}; %Label origin
        
            \draw[line width=0.02cm,-{Latex[length=3mm]}] (0,-3) -- (0,3.7);  %draw vertical axis
            \node[scale=1.2,left] (vertaxislabel) at (-0.05,3.6) {$\pmb{q}$}; %label vertical axis

            %preferred location labels
            \node[anchor=west,scale=0.9] (N_p15q15) at (0.75,-1.4) {$\boldsymbol S_2$}; %S2
            \node[anchor=west,scale=0.9] (N_p2q2) at (1.6,-0.3) {$\boldsymbol S_1$}; %S1
         
        \end{scope}    
    \end{tikzpicture}
    \caption{ \textbf{PK}\ref{PK1}, \textbf{PK}\ref{PK4}, partition the set $\Omega$ of feasible $(P,Q)$-values.}
    \label{fig_omega_partition_Ber_1}
\end{figure}

\begin{figure}[t!]
    \centering
        \begin{tikzpicture}[scale=\omegaFigScale,
                        every node/.style={scale=\omegaFigScale}] % nodes scale too
            \begin{scope}
                %\coordinate (r0) at (0.25,-2);
                %\coordinate (s0) at (0.25,3);
                %\coordinate (si) at (5,-2);
                %\fill[fill=gray!20] (r0) -- (s0) -- (si) -- cycle;
            
                %vertices of sample space triangle with non-zero probability
                \coordinate (r0) at (0.25,-2);
                \coordinate (s0) at (0.25,3);
                \coordinate (si) at (5,-2);
                %\coordinate (ri) at (1,-2);
                \fill[fill=gray!20] (r0) -- (s0) -- (si) -- cycle;

                %k/n p-stationarity line
                %\draw[line width=0.05cm,white] (0,1.4) -- (3.8,-2.5);

                %the b1 line
                %\draw[line width=0.05cm,white] (1.5,-2) -- (1.5,2.75);

                %the b2 line
                %\draw[line width=0.05cm,white] (0.25,-0.9) -- (4.25,-0.9);

                %the b line
                \draw[line width=0.05cm,white] (0,0.25) -- (2.45,-2.2);

                %the 1-b3 line
                %\draw[line width=0.05cm,white] (0,-1.25) -- (1,-2.2);
                    
                \draw[-{Latex[length=3mm]},line width=0.02cm] (-1,-2.25) -- (5.85,-2.25); %draw horizontal axis
                \node[anchor=north,scale=1.2] (x_axis_label) at (6,-2.15) {$\pmb{p}$}; %Label horizontal axiz

                %axes labels
                \node[anchor=north,scale=0.8] (N_label_2) at (2.65,-2.35) {$\pmb{1-b_1}$}; %Label horizontal axiz
                \node[anchor=north,scale=0.8] (N_label_2) at (-0.25,-1.7) {$\pmb{l_2}$}; %Label vertical axiz
                %\node[anchor=north,scale=0.8] (N_label_4) at (3.7,-2.35) {$\pmb{b}$}; %Label horizontal axiz
                \node[anchor=north,scale=1.0] (N_label_5) at (5.2,-2.2) {$\pmb{1}$}; %Label horizontal axiz
                \node[anchor=north,scale=0.8] (N_label_2) at (-0.5,0.55) {$\pmb{1-b_1}$}; %Label vertical axiz
                \node[anchor=north,scale=0.8] (N_label_2) at (0.35,-2.35) {$\pmb{l_1}$}; %Label horizontal axiz
                %\node[anchor=north,scale=0.8] (N_label_2) at (-0.25,1.6) {$\pmb{b}$}; %Label vertical axiz
                \node[anchor=north,scale=0.9] (N_label_5) at (-0.25,3.3) {$\pmb{1}$}; %Label vertical axiz
                %\node[anchor=north,scale=0.8] (N_label_4) at (2.3,-2.35) {$\pmb{1-b_1}$}; %Label horizontal axiz
                %\node[anchor=north,scale=0.8] (N_label_4) at (3.1,-2.35) {$\pmb{b}$}; %Label horizontal axiz
                %\node[anchor=north,scale=1.0] (N_label_5) at (3.9,-2.2) {$\pmb{\frac{k_1}{n-k_2}}$}; %Label horizontal axiz
                
                \node[anchor=north east,scale=1] (x_axis_label) at (0.05,-2.22) {$\pmb{0}$}; %Label origin
            
                \draw[line width=0.02cm,-{Latex[length=3mm]}] (0,-3) -- (0,3.7);  %draw vertical axes
                \node[scale=1.2,left] (vertaxislabel) at (-0.05,3.6) {$\pmb{q}$}; %label vertical axis

                %preferred locations

                %preferred location labels
                %\fill (0.35,-1.9) circle [radius=0.2em];%(p15,q15)
                \node[anchor=west,scale=0.9] (N_p15q15) at (0.6,-1.5) {$\boldsymbol S_2$}; %S2
                %\node[anchor=west,scale=0.9] (N_p14q14_b) at (1,-0.5) {$\boldsymbol S_2$}; %S2
                 \node[anchor=west,scale=0.9] (N_p2q2) at (1.6,-0.3) {$\boldsymbol S_1$}; %S1
             
            \end{scope}    
        \end{tikzpicture}
    \caption{ \textbf{PK}\ref{PK4}, \textbf{PK}\ref{PK5}, partition the set $\Omega$.
    }
    \label{fig_omega_partition_gen_CBI_simple}
\end{figure}

\begin{figure*}[t!]
%\captionsetup[figure]{format=hang}
    %\begin{subfigure}[]{0.47\linewidth}
    \centering
    \subfloat[A discrete prior over $\Omega$ that solves \eqref{eqn_Ber_CBI_problem} when $k>nb$. The black dots indicate $(p_i,q_i)$ locations (in the partition of  Fig.~\ref{fig_omega_partition_Ber_1}) that this prior may assign non-zero probability mass to.\label{fig_preferred_locations_Ber_samePKs}]{
    \begin{minipage}{0.48\linewidth}
    \centering
        \begin{tikzpicture}[scale=\omegaFigScale,
                        every node/.style={scale=\omegaFigScale}] % nodes scale too
            \begin{scope}
                %vertices of sample space triangle with non-zero probability
                \coordinate (r0) at (0.05,-1.25);
                \coordinate (s0) at (0.05,3);
                \coordinate (si) at (5,-2.2);
                \coordinate (ri) at (1,-2.2);
                \fill[fill=gray!20] (r0) -- (s0) -- (si) -- (ri) -- cycle;                
                %k/n q-stationarity line
                \draw[line width=0.05cm,white] (4.35,-2.5) -- (0,1.95);

                %k/n p-stationarity line
                \draw[line width=0.05cm,white] (0,1.4) -- (3.8,-2.5);

                %the b1 line
                %\draw[line width=0.05cm,white] (1.5,-2) -- (1.5,2.75);

                %the b2 line
                %\draw[line width=0.05cm,white] (0.25,-0.9) -- (4.25,-0.9);

                %the b line
                \draw[line width=0.05cm,white] (0,0.25) -- (2.45,-2.2);

                %the 1-b3 line
                \draw[line width=0.05cm,white] (0,-1.25) -- (1,-2.2);

                \draw[-{Latex[length=3mm]},line width=0.02cm] (-1,-2.25) -- (5.85,-2.25); %draw horizontal axis
                \node[anchor=north,scale=1.2] (x_axis_label) at (6,-2.15) {$\pmb{p}$}; %Label horizontal axiz

                %axes labels
                \node[anchor=north,scale=0.8] (N_label_2) at (2.65,-2.35) {$\pmb{1-b_1}$}; %Label horizontal axiz
                \node[anchor=north,scale=0.8] (N_label_4) at (1.25,-2.35) {$\pmb{l}$}; %Label horizontal axiz
                \node[anchor=north,scale=0.8] (N_label_4) at (3.7,-2.35) {$\pmb{b}$}; %Label horizontal axiz
                \node[anchor=north,scale=1.0] (N_label_5) at (4.2,-2.2) {$\pmb{\frac{k}{n}}$}; %Label horizontal axiz
                \node[anchor=north,scale=1.0] (N_label_5) at (5.2,-2.2) {$\pmb{1}$}; %Label horizontal axiz
                \node[anchor=north,scale=0.8] (N_label_2) at (-0.5,0.55) {$\pmb{1-b_1}$}; %Label vertical axiz
                \node[anchor=north,scale=0.8] (N_label_2) at (-0.25,-0.9) {$\pmb{l}$}; %Label vertical axiz
                \node[anchor=north,scale=0.8] (N_label_2) at (-0.25,1.6) {$\pmb{b}$}; %Label vertical axiz
                \node[anchor=north,scale=1] (N_label_2) at (-0.25,2.35) {$\pmb{\frac{k}{n}}$}; %Label vertical axiz
                \node[anchor=north,scale=0.9] (N_label_5) at (-0.25,3.3) {$\pmb{1}$}; %Label vertical axiz
                %\node[anchor=north,scale=0.8] (N_label_4) at (2.3,-2.35) {$\pmb{1-b_1}$}; %Label horizontal axiz
                %\node[anchor=north,scale=0.8] (N_label_4) at (3.1,-2.35) {$\pmb{b}$}; %Label horizontal axiz
                %\node[anchor=north,scale=1.0] (N_label_5) at (3.9,-2.2) {$\pmb{\frac{k_1}{n-k_2}}$}; %Label horizontal axiz
                
                \node[anchor=north east,scale=1] (x_axis_label) at (0.05,-2.22) {$\pmb{0}$}; %Label origin
            
                \draw[line width=0.02cm,-{Latex[length=3mm]}] (0,-3) -- (0,3.7);  %draw vertical axes
                \node[scale=1.2,left] (vertaxislabel) at (-0.05,3.6) {$\pmb{q}$}; %label vertical axis

                %preferred locations
                \fill (0.55,-1.7) circle [radius=0.2em];%(p2,q2)
                \fill (2.15,-0.2) circle [radius=0.2em];%(p1,q1)

                %preferred location labels
                %\fill (0.35,-1.9) circle [radius=0.2em];%(p15,q15)
                \node[anchor=west,scale=0.8] (N_p15q15) at (0.55,-1.5) {$(p_{2},q_{2})$}; %(p2, q2)
                %\fill (0.35,-1.2) circ(2,-1.2)0.2em];%(p14,q14)
                \node[anchor=west,scale=0.8] (N_p2q2) at (2.2,-0.05) {$(p_{1},q_{1})$}; %(p1, q1)

            \end{scope}    
        \end{tikzpicture}
        \end{minipage}}
%\hspace{3cm}
\hfill
%\begin{subfigure}[]{0.47\linewidth}
\subfloat[ A discrete prior over $\Omega$ that solves \eqref{eqn_Ber_CBI_problem_diffPKs} when $k>nb$. The black dots indicate the only $(p_i,q_i)$ locations (in the partition of Fig.~\ref{fig_omega_partition_gen_CBI_simple}) that this prior may assign non-zero probability mass to.\label{fig_preferred_locations_Ber_diffPKs}]{
\begin{minipage}{0.48\linewidth}
\centering
        \begin{tikzpicture}[scale=\omegaFigScale,
                        every node/.style={scale=\omegaFigScale}] % nodes scale too
            \begin{scope}
                %vertices of sample space triangle with non-zero probability
                %\coordinate (r0) at (0.05,-1.25);
                %\coordinate (s0) at (0.05,3);
                %\coordinate (si) at (5,-2.2);
                %\coordinate (ri) at (1,-2.2);
                %\fill[fill=gray!20] (r0) -- (s0) -- (si) -- (ri) -- cycle;

                %vertices of sample space triangle with non-zero probability
                \coordinate (r0) at (0.25,-2);
                \coordinate (s0) at (0.25,3);
                \coordinate (si) at (5,-2);
                %\coordinate (ri) at (1,-2);
                \fill[fill=gray!20] (r0) -- (s0) -- (si) -- cycle;

                %k1/(n-k2) p-stationarity line
                %\draw[line width=0.05cm,white] (0,3) -- (4.3,-2.5);

                \draw[line width=0.05cm,white] (4.35,-2.5) -- (0,1.95);

                %k2/(n-k1) q-stationarity line
                %\draw[line width=0.05cm,white] (4.95,-2.2) -- (0,1.75);
                
                %k/n q-stationarity line
                %\draw[line width=0.05cm,white] (4.35,-2.5) -- (0,1.95);

                %b line
                \draw[line width=0.05cm,white] (0,1.4) -- (3.8,-2.5);

                %the b1 line
                %\draw[line width=0.05cm,white] (1.5,-2) -- (1.5,2.75);

                %the b2 line
                %\draw[line width=0.05cm,white] (0.25,-0.9) -- (4.25,-0.9);

                %the 1-b_1 line
                \draw[line width=0.05cm,white] (0,0.25) -- (2.45,-2.2);

                %the l line
                %\draw[line width=0.05cm,white] (0,-1.25) -- (1,-2.2);

                \draw[-{Latex[length=3mm]},line width=0.02cm] (-1,-2.25) -- (5.85,-2.25); %draw horizontal axis
                \node[anchor=north,scale=1.2] (x_axis_label) at (6,-2.15) {$\pmb{p}$}; %Label horizontal axiz

                %axes labels
                \node[anchor=north,scale=0.8] (N_label_2) at (2.65,-2.35) {$\pmb{1-b_1}$}; %Label horizontal axiz
                \node[anchor=north,scale=0.8] (N_label_2) at (0.35,-2.35) {$\pmb{l_1}$}; %Label horizontal axiz
                %\node[anchor=north,scale=0.8] (N_label_4) at (1.25,-2.35) {$\pmb{l}$}; %Label horizontal axiz
                \node[anchor=north,scale=0.8] (N_label_4) at (3.5,-2.35) {$\pmb{b}$}; %Label horizontal axiz
                \node[anchor=north,scale=1.0] (N_label_5) at (4.2,-2.2) {$\pmb{\frac{k}{n}}$}; %Label horizontal axiz
                \node[anchor=north,scale=1.0] (N_label_5) at (5.2,-2.2) {$\pmb{1}$}; %Label horizontal axiz
                \node[anchor=north,scale=0.8] (N_label_2) at (-0.5,0.55) {$\pmb{1-b_1}$}; %Label vertical axiz
                %\node[anchor=north,scale=0.8] (N_label_2) at (-0.25,-0.9) {$\pmb{l}$}; %Label vertical axiz
                \node[anchor=north,scale=0.8] (N_label_2) at (-0.25,1.6) {$\pmb{b}$}; %Label vertical axiz
                \node[anchor=north,scale=1] (N_label_2) at (-0.25,2.35) {$\pmb{\frac{k}{n}}$}; %Label vertical axiz
                \node[anchor=north,scale=0.9] (N_label_5) at (-0.25,3.3) {$\pmb{1}$}; %Label vertical axiz
                \node[anchor=north,scale=0.8] (N_label_2) at (-0.25,-1.7) {$\pmb{l_2}$}; %Label vertical axiz

                %\node[anchor=north,scale=0.8] (N_label_4) at (2.3,-2.35) {$\pmb{1-b_1}$}; %Label horizontal axiz
                %\node[anchor=north,scale=0.8] (N_label_4) at (3.1,-2.35) {$\pmb{b}$}; %Label horizontal axiz
                %\node[anchor=north,scale=1.0] (N_label_5) at (3.9,-2.2) {$\pmb{\frac{k_1}{n-k_2}}$}; %Label horizontal axiz
                
                \node[anchor=north east,scale=1] (x_axis_label) at (0.05,-2.22) {$\pmb{0}$}; %Label origin
            
                \draw[line width=0.02cm,-{Latex[length=3mm]}] (0,-3) -- (0,3.7);  %draw vertical axes
                \node[scale=1.2,left] (vertaxislabel) at (-0.05,3.6) {$\pmb{q}$}; %label vertical axis

                %preferred locations
                \fill (0.3,-1.95) circle [radius=0.2em];%(p2,q2)
                %\fill (0.0,-1.2) circle [radius=0.2em];%(p2,q2)
                \fill (2.15,-0.2) circle [radius=0.2em];%(p1,q1)

                %preferred location labels
                \node[anchor=south west,scale=0.8] (N_p15q15) at (0.3,-2) {$(p_{2},q_{2})$}; %(p2, q2)
                %\node[anchor=south west,scale=0.8] (N_p15q15) at (0.0,-1.2) {$(p_{2},q_{2})$}; %(p2, q2)
                %\fill (0.35,-1.2) circ(2,-1.2)0.2em];%(p14,q14)
                \node[anchor=west,scale=0.8] (N_p2q2) at (2.2,-0.05) {$(p_{1},q_{1})$}; %(p1, q1)                
             
            \end{scope}    
        \end{tikzpicture}
        \end{minipage}}
%\end{subfigure}
    \caption{Example worst-case prior distributions when the assessor has uncertainty about which failure-types have occurred.}
\label{fig_CBI_Ber_same_vs_diffPKs}
\end{figure*}

\begin{figure*}[t!]
%\captionsetup[figure]{format=hang}
	%\begin{subfigure}[]{0.48\linewidth}
    %\begin{subfigure}[]{0.47\linewidth}
    \centering
    \subfloat[A discrete prior over $\Omega$ that solves \eqref{eqn_gen_CBI_problem_samePKs} when $k_1+k_2>nb$. The black dots indicate the only $(p_i,q_i)$ locations (in the partition of  Fig.~\ref{fig_omega_partition_Ber_1}) that this prior may assign non-zero probability mass to. Here, $(p_1,q_1)=(\frac{k_1}{n},\frac{k_2}{n})$. Whichever $(p_2,q_2)$ location gives the smallest likelihood value will be assigned non-zero probability $a$.\label{fig_preferred_locations_gen_CBI_samePKs}]{
    \begin{minipage}{0.48\linewidth}
    \centering
        \begin{tikzpicture}[scale=\omegaFigScale,
                        every node/.style={scale=\omegaFigScale}] % nodes scale too
            \begin{scope}
                %vertices of sample space triangle with non-zero probability
                \coordinate (r0) at (0.05,-1.25);
                \coordinate (s0) at (0.05,3);
                \coordinate (si) at (5,-2.2);
                \coordinate (ri) at (1,-2.2);
                \fill[fill=gray!20] (r0) -- (s0) -- (si) -- (ri) -- cycle;                

                %k1/(n-k2) p-stationarity line
                \draw[line width=0.05cm,white] (0,3) -- (4.3,-2.5);

                %k2/(n-k1) q-stationarity line
                \draw[line width=0.05cm,white] (4.95,-2.2) -- (0,1.75);
                
                %k/n q-stationarity line
                %\draw[line width=0.05cm,white] (4.35,-2.5) -- (0,1.95);

                %k/n p-stationarity line
                \draw[line width=0.05cm,white] (0,1.4) -- (3.8,-2.5);

                %the b1 line
                %\draw[line width=0.05cm,white] (1.5,-2) -- (1.5,2.75);

                %the b2 line
                %\draw[line width=0.05cm,white] (0.25,-0.9) -- (4.25,-0.9);

                %the b line
                \draw[line width=0.05cm,white] (0,0.25) -- (2.45,-2.2);

                %the 1-b3 line
                \draw[line width=0.05cm,white] (0,-1.25) -- (1,-2.2);

                \draw[-{Latex[length=3mm]},line width=0.02cm] (-1,-2.25) -- (5.85,-2.25); %draw horizontal axis
                \node[anchor=north,scale=1.2] (x_axis_label) at (6,-2.15) {$\pmb{p}$}; %Label horizontal axiz

                %axes labels
                \node[anchor=north,scale=0.8] (N_label_2) at (2.65,-2.35) {$\pmb{1-b_1}$}; %Label horizontal axiz
                \node[anchor=north,scale=0.8] (N_label_4) at (1.25,-2.35) {$\pmb{l}$}; %Label horizontal axiz
                \node[anchor=north,scale=0.8] (N_label_4) at (3.5,-2.35) {$\pmb{b}$}; %Label horizontal axiz
                \node[anchor=north,scale=1.0] (N_label_5) at (4.2,-2.2) {$\pmb{\frac{k_1}{n-k_2}}$}; %Label horizontal axiz
                \node[anchor=north,scale=1.0] (N_label_5) at (5.2,-2.2) {$\pmb{1}$}; %Label horizontal axiz
                \node[anchor=north,scale=0.8] (N_label_2) at (-0.5,0.55) {$\pmb{1-b_1}$}; %Label vertical axiz
                \node[anchor=north,scale=0.8] (N_label_2) at (-0.25,-0.9) {$\pmb{l}$}; %Label vertical axiz
                \node[anchor=north,scale=0.8] (N_label_2) at (-0.25,1.6) {$\pmb{b}$}; %Label vertical axiz
                \node[anchor=north,scale=1] (N_label_2) at (-0.5,2.35) {$\pmb{\frac{k_2}{n-k_1}}$}; %Label vertical axiz
                \node[anchor=north,scale=0.9] (N_label_5) at (-0.25,3.3) {$\pmb{1}$}; %Label vertical axiz
                %\node[anchor=north,scale=0.8] (N_label_4) at (2.3,-2.35) {$\pmb{1-b_1}$}; %Label horizontal axiz
                %\node[anchor=north,scale=0.8] (N_label_4) at (3.1,-2.35) {$\pmb{b}$}; %Label horizontal axiz
                %\node[anchor=north,scale=1.0] (N_label_5) at (3.9,-2.2) {$\pmb{\frac{k_1}{n-k_2}}$}; %Label horizontal axiz
                
                \node[anchor=north east,scale=1] (x_axis_label) at (0.05,-2.22) {$\pmb{0}$}; %Label origin
            
                \draw[line width=0.02cm,-{Latex[length=3mm]}] (0,-3) -- (0,3.7);  %draw vertical axes
                \node[scale=1.2,left] (vertaxislabel) at (-0.05,3.6) {$\pmb{q}$}; %label vertical axis

                %preferred locations
                \fill (1.1,-2.25) circle [radius=0.2em];%(p2,q2)
                \fill (0.0,-1.2) circle [radius=0.2em];%(p2,q2)
                \fill (2.55,-0.3) circle [radius=0.2em];%(p1,q1)

                %preferred location labels
                \node[anchor=south west,scale=0.8] (N_p15q15) at (1.1,-2.25) {$(p_{2},q_{2})$}; %(p2, q2)
                \node[anchor=south west,scale=0.8] (N_p15q15) at (0.0,-1.2) {$(p_{2},q_{2})$}; %(p2, q2)
                %\fill (0.35,-1.2) circ(2,-1.2)0.2em];%(p14,q14)
                \node[anchor=south west,scale=0.8] (N_p2q2) at (2.55,-0.3) {$(p_{1},q_{1})$}; %(p1, q1)
             
            \end{scope}    
        \end{tikzpicture}
        \end{minipage}}
%\end{subfigure}
\hfill
%\hspace{3cm}
%\begin{subfigure}[]{0.47\linewidth}
\subfloat[A discrete prior over $\Omega$ that solves \eqref{eqn_gen_CBI_problem_diffPKs} when $k_1+k_2>nb$. The black dots indicate the only $(p_i,q_i)$ locations (in the partition of  Fig.~\ref{fig_omega_partition_gen_CBI_simple}) that this prior may assign non-zero probability mass to. Here, $(p_1,q_1)=(\frac{k_1}{n},\frac{k_2}{n})$.\label{fig_preferred_locations_gen_CBI_1}]{
\begin{minipage}{0.48\linewidth}
\centering
        \begin{tikzpicture}[scale=\omegaFigScale,
                        every node/.style={scale=\omegaFigScale}] % nodes scale too
            \begin{scope}
                %vertices of sample space triangle with non-zero probability
                %\coordinate (r0) at (0.05,-1.25);
                %\coordinate (s0) at (0.05,3);
                %\coordinate (si) at (5,-2.2);
                %\coordinate (ri) at (1,-2.2);
                %\fill[fill=gray!20] (r0) -- (s0) -- (si) -- (ri) -- cycle;

                %vertices of sample space triangle with non-zero probability
                \coordinate (r0) at (0.25,-2);
                \coordinate (s0) at (0.25,3);
                \coordinate (si) at (5,-2);
                %\coordinate (ri) at (1,-2);
                \fill[fill=gray!20] (r0) -- (s0) -- (si) -- cycle;

                %k1/(n-k2) p-stationarity line
                \draw[line width=0.05cm,white] (0,3) -- (4.3,-2.5);

                %k2/(n-k1) q-stationarity line
                \draw[line width=0.05cm,white] (4.95,-2.2) -- (0,1.75);
                
                %k/n q-stationarity line
                %\draw[line width=0.05cm,white] (4.35,-2.5) -- (0,1.95);

                %b line
                \draw[line width=0.05cm,white] (0,1.4) -- (3.8,-2.5);

                %the b1 line
                %\draw[line width=0.05cm,white] (1.5,-2) -- (1.5,2.75);

                %the b2 line
                %\draw[line width=0.05cm,white] (0.25,-0.9) -- (4.25,-0.9);

                %the 1-b_1 line
                \draw[line width=0.05cm,white] (0,0.25) -- (2.45,-2.2);

                %the l line
                %\draw[line width=0.05cm,white] (0,-1.25) -- (1,-2.2);

                \draw[-{Latex[length=3mm]},line width=0.02cm] (-1,-2.25) -- (5.85,-2.25); %draw horizontal axis
                \node[anchor=north,scale=1.2] (x_axis_label) at (6,-2.15) {$\pmb{p}$}; %Label horizontal axiz

                %axes labels
                \node[anchor=north,scale=0.8] (N_label_2) at (2.65,-2.35) {$\pmb{1-b_1}$}; %Label horizontal axiz
                \node[anchor=north,scale=0.8] (N_label_2) at (0.35,-2.35) {$\pmb{l_1}$}; %Label horizontal axiz
                %\node[anchor=north,scale=0.8] (N_label_4) at (1.25,-2.35) {$\pmb{l}$}; %Label horizontal axiz
                \node[anchor=north,scale=0.8] (N_label_4) at (3.5,-2.35) {$\pmb{b}$}; %Label horizontal axiz
                \node[anchor=north,scale=1.0] (N_label_5) at (4.2,-2.2) {$\pmb{\frac{k_1}{n-k_2}}$}; %Label horizontal axiz
                \node[anchor=north,scale=1.0] (N_label_5) at (5.2,-2.2) {$\pmb{1}$}; %Label horizontal axiz
                \node[anchor=north,scale=0.8] (N_label_2) at (-0.5,0.55) {$\pmb{1-b_1}$}; %Label vertical axiz
                %\node[anchor=north,scale=0.8] (N_label_2) at (-0.25,-0.9) {$\pmb{l}$}; %Label vertical axiz
                \node[anchor=north,scale=0.8] (N_label_2) at (-0.25,1.6) {$\pmb{b}$}; %Label vertical axiz
                \node[anchor=north,scale=1] (N_label_2) at (-0.5,2.35) {$\pmb{\frac{k_2}{n-k_1}}$}; %Label vertical axiz
                \node[anchor=north,scale=0.9] (N_label_5) at (-0.25,3.3) {$\pmb{1}$}; %Label vertical axiz
                \node[anchor=north,scale=0.8] (N_label_2) at (-0.25,-1.7) {$\pmb{l_2}$}; %Label vertical axiz

                %\node[anchor=north,scale=0.8] (N_label_4) at (2.3,-2.35) {$\pmb{1-b_1}$}; %Label horizontal axiz
                %\node[anchor=north,scale=0.8] (N_label_4) at (3.1,-2.35) {$\pmb{b}$}; %Label horizontal axiz
                %\node[anchor=north,scale=1.0] (N_label_5) at (3.9,-2.2) {$\pmb{\frac{k_1}{n-k_2}}$}; %Label horizontal axiz
                
                \node[anchor=north east,scale=1] (x_axis_label) at (0.05,-2.22) {$\pmb{0}$}; %Label origin
            
                \draw[line width=0.02cm,-{Latex[length=3mm]}] (0,-3) -- (0,3.7);  %draw vertical axes
                \node[scale=1.2,left] (vertaxislabel) at (-0.05,3.6) {$\pmb{q}$}; %label vertical axis

                %preferred locations
                \fill (0.3,-1.95) circle [radius=0.2em];%(p2,q2)
                %\fill (0.0,-1.2) circle [radius=0.2em];%(p2,q2)
                \fill (2.55,-0.3) circle [radius=0.2em];%(p1,q1)

                %preferred location labels
                \node[anchor=south west,scale=0.8] (N_p15q15) at (0.3,-2) {$(p_{2},q_{2})$}; %(p2, q2)
                %\node[anchor=south west,scale=0.8] (N_p15q15) at (0.0,-1.2) {$(p_{2},q_{2})$}; %(p2, q2)
                %\fill (0.35,-1.2) circ(2,-1.2)0.2em];%(p14,q14)
                \node[anchor=south west,scale=0.8] (N_p2q2) at (2.55,-0.3) {$(p_{1},q_{1})$}; %(p1, q1)
             
            \end{scope}    
        \end{tikzpicture}
        \end{minipage}}
%\end{subfigure}
    \caption{ Example worst-case prior distributions when the assessor knows which failure-types have occurred.}
\label{fig_CBI_gen_same_vs_diffPKs}
\end{figure*}

\subsection{The Feasible Values for FN and FP Probabilities}
The feasible $(P,Q)$-values lie within a triangle, $\Omega$, in the unit square $[0,1]^2$. Two partitions of $\Omega$ are relevant for the assessment scenarios we consider. Fig.~\ref{fig_omega_partition_Ber_1} is the partition due to beliefs \textbf{PK}\ref{PK1} and \textbf{PK}\ref{PK4}, and Fig.~\ref{fig_omega_partition_gen_CBI_simple} is the partition due to \textbf{PK}\ref{PK4} and \textbf{PK}\ref{PK5}. Depending on the PKs, $\mathcal D$ contains distributions that assign probabilities to subsets in Fig.~\ref{fig_omega_partition_Ber_1} or Fig.~\ref{fig_omega_partition_gen_CBI_simple}.

\section{Confidence Bounds on Reliability}
\label{sec_results}
We present four theorems that give conservative confidence in an upper bound $b$ on the \emph{pfc} $P+Q$. Recall, accuracy $\Theta$ is $1-\textit{pfc}$. For ease of presentation, Theorems~\ref{Thrm_CBI_Bernoulli}--\ref{Thrm_gen_CBI_diffPKs} give the assessor's smallest ``confidence'' (i.e. infima), rather than the equivalent assessor's greatest doubt (i.e. suprema) in \eqref{eqn_lessabstract_CBI_problem}, \eqref{eqn_Bernoulli_CBI_problem}. 

In what follows, if $L_i$ is the value of the relevant likelihood at point $(p_i,q_i)\in{\boldsymbol S}_i$ of the relevant partition of $\Omega$, define\footnote{$L^*_i$ or $L_{i*}$ are relevant only when their ``${\boldsymbol S}_i\cap...$'' are non-empty.} 
\begin{gather*}
L^*_i:=\sup\limits_{{\boldsymbol S}_i\cap [P+Q> b]}L_i\,,\quad L_{i*}:=\inf\limits_{{\boldsymbol S}_i\cap [P+Q\leqslant b]}L_i
\end{gather*}

\subsection{Assessor is Unaware of Failure--Type Occurrences}
Consider the problem \eqref{eqn_Bernoulli_CBI_problem} of determining the least confidence (in the software's \emph{pfc} being small enough), when the assessor knows $k$ failures occurred (over $n$ classifications) but does not know each failure's type. Theorem~\ref{Thrm_CBI_Bernoulli} solves this.

\begin{theorem}
\label{Thrm_CBI_Bernoulli}

%\noindent\textbf{Problem:}
\noindent For $n,\,k$, $a$, $b$, $b_1$, $l$, $P$, $Q$, $\mathcal{D}$ as already defined and constrained, we seek 
\begin{align} &\underset{\mathcal{D}}{\inf}\, \dfrac{\mathbb E[L(n,k; P, Q)\mathbf{1}_{P+Q\leqslant b}]}{\mathbb E[L(n,k; P, Q)]}   \label{eqn_Ber_CBI_problem}\\
    s.t. \,\,&\text{\bf PK}\ref{PK1},\,\text{\bf PK}\ref{PK4}  \nonumber 
\end{align}

\noindent\textbf{Solution:} 
Let $L_i:=L(n,k; p_i, q_i)$ for $(p_i,q_i)\in {\boldsymbol S_i}$ in Fig.~\ref{fig_omega_partition_Ber_1}. The infimum $\Phi^*$ takes the form
\begin{align*}
 \Phi^*=&\frac{aL_{2*}}{aL_{2*}+(1-a)L^*_{1}}{\boldsymbol 1}_{1-b_1\leqslant b}
\end{align*}
That is, objective function \eqref{eqn_Ber_CBI_problem} attains its infimum, $\Phi^*$, with  a discrete prior distribution over $\Omega$ of the form (e.g. see Fig.~\ref{fig_preferred_locations_Ber_samePKs}):   
\begin{align*}
    \mathbb P(P=p_i,Q=q_i)=\begin{cases}
    1-a, & \text{ if } i=1\\
    a, & \text{ if } i=2%;\\
           %0, & \text{ otherwise}
    \end{cases}
\end{align*}
 %and the $p_i$s and $q_i$s are placed at preferred locations, so that $\Phi$ is maximized.
\end{theorem}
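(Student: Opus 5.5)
The proof reduces the objective to a comparison of likelihood masses on the two regions of Fig.~\ref{fig_omega_partition_Ber_1}. Here $\boldsymbol S_2=\Omega\cap[l\leqslant P+Q\leqslant 1-b_1]$ has prior mass $a$ (by \textbf{PK}\ref{PK4}), and $\boldsymbol S_1=\Omega\cap[P+Q>1-b_1]$ has mass $1-a$; \textbf{PK}\ref{PK1} removes everything below $l$. Write the objective as $\Phi=1/(1+R)$ with
\[
R=\frac{\mathbb E[L\mathbf 1_{P+Q>b}]}{\mathbb E[L\mathbf 1_{P+Q\leqslant b}]},
\]
so minimising $\Phi$ means maximising $R$. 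Since $L(n,k;p,q)$ depends on $(p,q)$ only through $p+q$, the problem is in effect one-dimensional in the \emph{pfc}, and the FP/FN split plays no role.

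\textbf{Case $1-b_1>b$.} Here $\boldsymbol S_1$ contains points with $P+Q>b$. One such point is $(p,q)$ with $p+q=\min\{1,\max\{k/n,b'\}\}$ for some $b'>b$ near $b$. I then consider a sequence of priors. Each puts mass $1-a$ on a point of $\boldsymbol S_1$ with $P+Q>b$ and positive likelihood. It puts mass $\epsilon$ on a point of $\boldsymbol S_2$ with $P+Q\leqslant b$, and mass $a-\epsilon$ on a point of $\boldsymbol S_2$ with $P+Q>b$ if one exists; otherwise it pushes the $\boldsymbol S_2$ mass to a point of $\boldsymbol S_2$ at which $L$ vanishes or is arbitrarily small relative to the $\boldsymbol S_1$ term. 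If $L>0$ there, then $\Phi\to 0$ as $\epsilon\to 0$, so the infimum is $0$, matching the indicator ${\boldsymbol 1}_{1-b_1\leqslant b}=0$. I would handle the degenerate sub-cases $k=0$ and $k=n$ explicitly, using that the relevant likelihood value is positive, or simply let the mass on $[P+Q\leqslant b]$ tend to zero.

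\textbf{Case $1-b_1\leqslant b$.} Now all of $\boldsymbol S_2$ lies in $[P+Q\leqslant b]$, so the denominator of $R$ equals $\mathbb E[L\mathbf 1_{\boldsymbol S_2}]+\mathbb E[L\mathbf 1_{\boldsymbol S_1\cap[P+Q\leqslant b]}]$. I bound each term for an arbitrary $\mathbb P\in\mathcal D$:
\[
\mathbb E[L\mathbf 1_{P+Q>b}]\leqslant L_1^*\,\mathbb P(\boldsymbol S_1\cap[P+Q>b])\leqslant (1-a)L_1^*,
\]
and
\[
\mathbb E[L\mathbf 1_{P+Q\leqslant b}]\geqslant aL_{2*}+\mathbb E[L\mathbf 1_{\boldsymbol S_1\cap[P+Q\leqslant b]}]\geqslant aL_{2*}.
\]
To make the trade-off within $\boldsymbol S_1$ tight, I would argue as follows. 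Moving mass from $\boldsymbol S_1\cap[P+Q\leqslant b]$ to a point of $\boldsymbol S_1\cap[P+Q>b]$ adds to the numerator and subtracts from the denominator, so it weakly increases $R$. Hence $R\leqslant (1-a)L_1^*/(aL_{2*})$, which gives $\Phi\geqslant \frac{aL_{2*}}{aL_{2*}+(1-a)L^*_1}$. For attainment, $L$ is continuous in $p+q$ and the regions are closed, or at worst half-open, subsets of the compact triangle. So $L_{2*}$ is attained on $\boldsymbol S_2$ (closed). $L_1^*$ is attained at $p+q=k/n$ if $k/n>b$ (the case $k>nb$ in Fig.~\ref{fig_preferred_locations_Ber_samePKs}); otherwise it is approached as $p+q\downarrow b$, and the infimum is reached only in the limit. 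The two-point prior with masses $1-a$ at $(p_1,q_1)$ and $a$ at $(p_2,q_2)$ satisfies \textbf{PK}\ref{PK1} and \textbf{PK}\ref{PK4} and yields exactly $\Phi^*$.

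\textbf{Main obstacle.} I expect the main obstacle to be the bookkeeping of boundary and degenerate cases rather than the core inequality: whether sup/inf are attained, the case where $L_{2*}=0$ (e.g. $k=0$ is impossible with $l>0$, but $k$ large and $p+q=1$ are edge cases), and well-definedness of the posterior (excluding priors with zero denominator). I would first settle the case $k>nb$ with interior maximiser $k/n$, then treat the remaining cases by limiting sequences of priors.
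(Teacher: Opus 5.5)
Your proof is correct, but it takes a different route from the paper's. The paper proves Theorem~\ref{Thrm_gen_CBI_diffPKs} in Appendix~\ref{subsec_proof_Theorem4} and says Theorem~\ref{Thrm_CBI_Bernoulli} follows analogously. That proof has two stages. First, it invokes an external extremal-distribution result (Theorem~1 of \cite{moreno1991robust} or Proposition~2.2 of \cite{salako2026fixedpointcharacterisationsextremaldistributions}) to restrict to limits of priors with one support point in each of $\boldsymbol S_1$ and $\boldsymbol S_2$. Second, it runs a Dinkelbach-type argument on $h_\phi=L\mathbf 1_{[P+Q\leqslant b]}-\phi L$ to locate those two points: minimise $L$ on $\boldsymbol S_2$, and on $\boldsymbol S_1$ maximise $L$ over the part above the $b$-line, where $h_\phi\leqslant 0$.

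You skip the reduction entirely. Because \textbf{PK}\ref{PK4} fixes $\mathbb P(\boldsymbol S_1)=1-a$ and $\mathbb P(\boldsymbol S_2)=a$ exactly, and $\boldsymbol S_2\subseteq[P+Q\leqslant b]$ when $1-b_1\leqslant b$, your bounds $\mathbb E[L\mathbf 1_{P+Q>b}]\leqslant(1-a)L_1^*$ and $\mathbb E[L\mathbf 1_{P+Q\leqslant b}]\geqslant aL_{2*}$ hold for every admissible prior. Monotonicity of $x/(x+y)$ then gives $\Phi\geqslant\Phi^*$ directly, and explicit two-point priors show the bound is sharp. Your attainment discussion (attained at $p+q=k/n$ when $k>nb$, otherwise only approached as $p+q\downarrow b$) matches the paper's remark that worst-case priors are limits of feasible priors.

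What your route buys: it is self-contained and fully rigorous. It also never uses the fact that $L$ depends only on $p+q$, so the same two inequalities give the lower bounds of Theorems~\ref{Thrm_CBI_Bernoulli_diffPKS}--\ref{Thrm_gen_CBI_diffPKs} too. For Theorem~\ref{Thrm_gen_CBI_samePKs}, $L_{2*}=0$ automatically once $k_1\geqslant 1$ or $k_2\geqslant 1$. What the paper's route buys: the fixed-point/Dinkelbach machinery is heavier and leans on cited results, but it extends to settings where the constraints leave real freedom in allocating mass across regions, which is exactly where your direct bound would stop being tight.

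Two small remarks. Your ``moving mass'' step is redundant, since the two displayed bounds already give the result. Working with $D/(D+N)$ rather than $R$ also avoids dividing by $L_{2*}$ when it is $0$. Separately, the case $1-b_1>b$ is excluded by the paper's standing constraint $0<1-b\leqslant b_1$. Your treatment of it is fine, but $\boldsymbol S_2\cap[P+Q>b]$ is always non-empty there. Putting all of $\boldsymbol S_2$'s mass on it gives $\Phi=0$ exactly, with no $\epsilon$ needed.
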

Our second theorem also solves the problem, but the assessor expresses the more detailed \textbf{PK}\ref{PK5} belief instead of \textbf{PK}\ref{PK1}.

\begin{theorem}
\label{Thrm_CBI_Bernoulli_diffPKS}
%\noindent\textbf{Problem:}
\noindent For $n,\,k$, $a$, $b$, $b_1$, $l_1$, $l_2$, $P$, $Q$, $\mathcal{D}$ as already defined and constrained, we seek 
\begin{align} &\underset{\mathcal{D}}{\inf}\, \dfrac{\mathbb E[L(n,k; P, Q)\mathbf{1}_{P+Q\leqslant b}]}{\mathbb E[L(n,k; P, Q)]}   \label{eqn_Ber_CBI_problem_diffPKs}\\
    s.t. \,\,&\text{\bf PK}\ref{PK4},\,\text{\bf PK}\ref{PK5}  \nonumber 
\end{align}

\noindent\textbf{Solution:} 
Let $L_i:=L(n,k; p_i, q_i)$ for $(p_i,q_i)\in {\boldsymbol S_i}$ in Fig.~\ref{fig_omega_partition_gen_CBI_simple}. The infimum $\Phi^*$ takes the form
\begin{align*}
 \Phi^*=&\frac{aL_{2*}}{aL_{2*}+(1-a)L^*_{1}}{\boldsymbol 1}_{1-b_1\leqslant b}
\end{align*}
That is, objective function \eqref{eqn_Ber_CBI_problem_diffPKs} attains its infimum, $\Phi^*$, with  a discrete prior distribution over $\Omega$ of the form (e.g. see Fig.~\ref{fig_preferred_locations_Ber_diffPKs}):    
\begin{align*}
    \mathbb P(P=p_i,Q=q_i)=\begin{cases}
    1-a, & \text{ if } i=1\\
    a, & \text{ if } i=2%;\\
           %0, & \text{ otherwise}
    \end{cases}
\end{align*}
\end{theorem}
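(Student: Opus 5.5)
Theorem~\ref{Thrm_CBI_Bernoulli_diffPKS} can be handled by the same two-stage reduction as Theorem~\ref{Thrm_CBI_Bernoulli}: reduce to priors supported on a few points, then move those points to extreme likelihood locations. Throughout, $L(n,k;p,q)=(p+q)^k(1-p-q)^{n-k}$ depends only on $s=p+q$.

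\textbf{Partition.} Under \textbf{PK}\ref{PK5} the support is the triangle $\{p\geqslant l_1,\,q\geqslant l_2,\,p+q\leqslant 1\}$. \textbf{PK}\ref{PK4} splits it into ${\boldsymbol S}_2=\{p+q\leqslant 1-b_1\}$, with mass $a$, and ${\boldsymbol S}_1=\{p+q>1-b_1\}$, with mass $1-a$ (Fig.~\ref{fig_omega_partition_gen_CBI_simple}).

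\textbf{Case $1-b_1>b$.} Then ${\boldsymbol S}_2\subseteq[P+Q\leqslant b]$ may or may not hold. The claimed value is $0$, reached by placing mass $a$ on points of ${\boldsymbol S}_2$ with $P+Q>b$ when $b<l_1+l_2$. Otherwise the $a$-mass sits at the boundary of ${\boldsymbol S}_2$ and the infimum is approached in the limit.

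I expect this to be the main subtlety. It may require showing that with $1-b_1>b$ one can place all mass (both the $a$ and $1-a$ parts) in $[P+Q>b]$. This works because ${\boldsymbol S}_2\cap[P+Q>b]$ is nonempty whenever $b<1-b_1$: it contains points with $s\in(b,1-b_1]$ and $p\geqslant l_1$, $q\geqslant l_2$. Such points exist since $l_1+l_2\leqslant 1-b_1$. The numerator is then $0$, so $\Phi^*=0$, matching the indicator.

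\textbf{Case $1-b_1\leqslant b$, step 1: two-point reduction.} Now ${\boldsymbol S}_2\subseteq[P+Q\leqslant b]$. Write the objective as
\[
\frac{aE_2+(1-a)E_1^{\leqslant}}{aE_2+(1-a)(E_1^{\leqslant}+E_1^{>})},
\]
where $E_2$ is the conditional expectation of $L$ on ${\boldsymbol S}_2$, and $E_1^{\leqslant}, E_1^{>}$ are the expectations of $L\mathbf 1$ on ${\boldsymbol S}_1$ split by $s\leqslant b$ versus $s>b$. A ratio $x/(x+y)$ is increasing in $x$ and decreasing in $y$. Hence any mass of the $1-a$ part lying in $s\leqslant b$ should be moved to $s>b$, where it increases $y$ and decreases $x$. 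This yields
\[
\frac{aE_2}{aE_2+(1-a)E_1^{>}}.
\]
It is then minimized by making $E_2$ as small as possible and $E_1^{>}$ as large as possible. Conditional expectations are bounded by the inf and sup of $L$ over the region. Those bounds are attained, or approached, by point masses. This gives $E_2\geqslant L_{2*}$ and $E_1^{>}\leqslant L_1^*$.

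\textbf{Step 2: attainment and locating the points.} Because $L$ depends only on $s$, $L_1^*$ is the supremum of $s^k(1-s)^{n-k}$ over $s\in(\max(b,1-b_1),1]$. This function is unimodal with peak at $k/n$. When $k>nb$, the peak lies in that interval, so $(p_1,q_1)$ sits on the line $p+q=k/n$. $L_{2*}$ is the infimum over $s\in[l_1+l_2,1-b_1]$, attained at an endpoint by unimodality. When $k>nb$ the function is increasing there, so the minimum is at $s=l_1+l_2$, i.e.\ $(p_2,q_2)=(l_1,l_2)$, as in Fig.~\ref{fig_preferred_locations_Ber_diffPKs}.

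The two-point prior with masses $1-a$ and $a$ satisfies \textbf{PK}\ref{PK4} and \textbf{PK}\ref{PK5}. It attains $\Phi^*$, which completes the argument.
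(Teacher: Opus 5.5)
Your proof is correct, but it takes a more elementary route than the paper. The paper's proof is written out in Appendix~\ref{subsec_proof_Theorem4} for the typed analogue, Theorem~\ref{Thrm_gen_CBI_diffPKs}, and this theorem is said to follow by analogous steps. It first invokes an external extremal-distribution result (Theorem~1 of \cite{moreno1991robust} or Proposition~2.2 of \cite{salako2026fixedpointcharacterisationsextremaldistributions}) to restrict attention to weak limits of priors with one support point in each of ${\boldsymbol S}_1$ and ${\boldsymbol S}_2$. It then uses the Dinkelbach difference $h_\phi=f-\phi g$, which equals $(1-\phi)L$ on $[P+Q\leqslant b]$ and $-\phi L$ on $[P+Q>b]$, to conclude that the ${\boldsymbol S}_2$ point should minimise $L$ and the ${\boldsymbol S}_1$ point should lie above the $b$-line and maximise $L$. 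You instead prove the lower bound directly for every prior in $\mathcal D$. Since ${\boldsymbol S}_2\subseteq[P+Q\leqslant b]$, monotonicity of $x/(x+y)$ lets you discard the ${\boldsymbol S}_1$-mass below the $b$-line, and the bounds $E_2\geqslant L_{2*}$ and $E_1^{>}\leqslant L_1^*$ then give $\Phi^*$, which two-point priors attain or approach. This needs no reduction theorem. Your step~1 also never uses that $L$ depends only on $p+q$, so the same argument yields the common formula of all four theorems. What the paper's fixed-point/Dinkelbach framing buys is a template that scales to richer constraint sets (more regions, or the Markov-model analysis of Section~\ref{sec_discussion}), where direct monotonicity bookkeeping becomes unwieldy. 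You also go beyond the paper in two places. First, you treat the case $1-b_1>b$, which the paper sets aside via the standing constraint $0<1-b\leqslant b_1$. Your opening remarks there about $b<l_1+l_2$ and boundary limits are muddled and unnecessary, but the resolution is right: put both masses in $[P+Q>b]$, which is possible because $l_1+l_2\leqslant 1-b_1$. Second, your step~2 locates the support points for $k>nb$ via unimodality of $s^k(1-s)^{n-k}$, matching Fig.~\ref{fig_preferred_locations_Ber_diffPKs}; the paper's proof leaves this implicit.
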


\subsection{Assessor is Aware of Failure--Type Occurrences}
Now consider problem \eqref{eqn_lessabstract_CBI_problem} of determining the least confidence (in the \emph{pfc} being small enough) an assessor can justify, when the assessor knows $k_1$ FPs and $k_2$ FNs have occurred in $n$ classifications. Our third theorem is analogous to Theorem~\ref{Thrm_CBI_Bernoulli}.

\begin{theorem}
\label{Thrm_gen_CBI_samePKs}
%\noindent\textbf{Problem:}

\noindent For $n,\,k_1,\,k_2$, $a$, $b$, $b_1$, $l$, $P$, $Q$, $\mathcal{D}$ as already defined and constrained, we seek 
\begin{align} &\underset{\mathcal{D}}{\inf}\, \dfrac{\mathbb E[L(n,k_1,k_2; P, Q)\mathbf{1}_{P+Q\leqslant b}]}{\mathbb E[L(n,k_1,k_2; P, Q)]}   \label{eqn_gen_CBI_problem_samePKs}\\
    s.t. \,\,&\text{\bf PK}\ref{PK1},\,\text{\bf PK}\ref{PK4}  \nonumber 
\end{align}

\noindent\textbf{Solution:} 
\noindent Let  $L_{i}:=L(n,k_1,k_2; p_{i},q_{i})$ for $(p_{i},q_{i})\in {\boldsymbol S_{i}}$ in Fig.~\ref{fig_omega_partition_Ber_1}. The infimum $\Phi^*$ is $0$ if $k_1\geqslant 1$ or $k_2\geqslant 1$. Otherwise,
\begin{align*}
 \Phi^*=&\frac{aL_{2*}}{aL_{2*}+(1-a)L^*_{1}}{\boldsymbol 1}_{1-b_1\leqslant b,\,k_1=k_2=0}
\end{align*}

That is, objective function \eqref{eqn_gen_CBI_problem_samePKs} attains its infimum, $\Phi^*$, with  a discrete prior distribution over $\Omega$ of the form (e.g. see Fig.~\ref{fig_preferred_locations_gen_CBI_samePKs}):  
\begin{align*}
    \mathbb P(P=p_i,Q=q_i)=\begin{cases}
    1-a, & \text{ if } i=1\\
    a, & \text{ if } i=2%;\\
           %0, & \text{ otherwise}
    \end{cases}
\end{align*}
 %and the $p_i$s and $q_i$s are placed at preferred locations, so that $\Phi$ is maximized.
\end{theorem}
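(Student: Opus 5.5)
The plan is to split on whether any failure has been typed, because the only structural difference from Theorem~\ref{Thrm_CBI_Bernoulli} is that $L(n,k_1,k_2;p,q)$ can vanish on the boundary of $\Omega$. I will use the partition of Fig.~\ref{fig_omega_partition_Ber_1}: ${\boldsymbol S}_2=\{l\leqslant p+q\leqslant 1-b_1\}$, and by \textbf{PK}\ref{PK4} it carries mass $a$; ${\boldsymbol S}_1=\{p+q>1-b_1\}$ carries mass $1-a$. Note that \textbf{PK}\ref{PK1} restricts only $p+q$. Hence ${\boldsymbol S}_2$ contains points on both axes, such as $(0,l)$ and $(l,0)$, and more generally $(0,s)$ and $(s,0)$ for $l\leqslant s\leqslant 1-b_1$.

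\emph{Case $k_1\geqslant 1$ or $k_2\geqslant 1$.} Say $k_1\geqslant 1$; the case $k_2\geqslant1$ is symmetric. I will exhibit an admissible prior whose posterior confidence is $0$. It places mass $1-a$ at $(p_1,q_1)=(\frac{k_1}{n},\frac{k_2}{n})$, which lies in ${\boldsymbol S}_1\cap[P+Q>b]$ because the data satisfy $\frac{k_1+k_2}{n}>b\geqslant 1-b_1$ by Section~\ref{subsec_paramconstraints}. Its likelihood there is strictly positive, so the posterior is well defined. The remaining mass $a$ goes at $(0,q_2)$ with $l\leqslant q_2\leqslant 1-b_1$, and there $L=0^{k_1}(\cdots)=0$. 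Every $[P+Q\leqslant b]$ contribution to the numerator is therefore zero, and $\Phi^*=0$, which is trivially an infimum since the objective is nonnegative.

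If instead $1-b_1>b$ (the indicator case), I will not need typed failures at all. I put the mass $a$ at a point of ${\boldsymbol S}_2$ with $p+q\in(b,1-b_1]$, and the numerator again vanishes. This also yields the factor ${\boldsymbol 1}_{1-b_1\leqslant b}$ in the remaining case.

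\emph{Case $k_1=k_2=0$ and $1-b_1\leqslant b$.} Here $L(n,0,0;p,q)=(1-p-q)^n=L(n,0;p,q)$. The objective depends on the prior only through the law of $P+Q$, and the constraints \textbf{PK}\ref{PK1} and \textbf{PK}\ref{PK4} are identical to those of Theorem~\ref{Thrm_CBI_Bernoulli}. I therefore expect to reduce to that theorem with $k=0$. To avoid relying on any $k>nb$ assumption in its proof, I will repeat the short argument directly.

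The lower bound goes as follows. For any prior in $\mathcal D$, ${\boldsymbol S}_2\subseteq[P+Q\leqslant b]$, so $X:=\mathbb E[L\mathbf 1_{P+Q\leqslant b}]\geqslant \mathbb E[L\mathbf 1_{{\boldsymbol S}_2}]\geqslant aL_{2*}$. Also $Y:=\mathbb E[L\mathbf 1_{P+Q>b}]\leqslant(1-a)L_1^*$, since only ${\boldsymbol S}_1$ meets $[P+Q>b]$. The map $(X,Y)\mapsto X/(X+Y)$ is increasing in $X$ and decreasing in $Y$, which gives the lower bound $aL_{2*}/(aL_{2*}+(1-a)L_1^*)$.

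For attainment I use the two-point prior. It puts mass $a$ at a minimiser of $L$ over ${\boldsymbol S}_2$: this is any point with $p+q=1-b_1$ since $(1-s)^n$ decreases in $s$, and it is the pair of $(p_2,q_2)$ locations in Fig.~\ref{fig_preferred_locations_gen_CBI_samePKs}. It puts mass $1-a$ at a maximiser of $L$ over ${\boldsymbol S}_1\cap[P+Q>b]$.

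The main obstacle I anticipate is that this last set is open at $p+q=b$, so $L_1^*$ is a supremum that is not attained. I plan to handle it by taking a sequence of priors with support points approaching $p+q=b$ from above, so that the infimum is approached in the limit. I will also check that every prior used keeps the denominator positive, so that it lies in the restricted $\mathcal D$.
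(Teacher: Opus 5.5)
Your overall plan works, and is more elementary than the paper's, but one step fails as written.

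\textbf{The failing step.} In the typed-failure case you put the mass $1-a$ at $(\frac{k_1}{n},\frac{k_2}{n})$ and justify $\frac{k_1+k_2}{n}>b$ by Section~\ref{subsec_paramconstraints}. That section says no such thing. It only remarks that $\frac{k_1}{n},\frac{k_2}{n}$ \emph{typically} exceed $1-b_1$, not $b$. The theorem's claim $\Phi^*=0$ also carries no hypothesis $k_1+k_2>nb$; that condition appears only in the caption of Fig.~\ref{fig_preferred_locations_gen_CBI_samePKs}.

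In the paper's own use of the result, Scenario~\ref{example_1failure}, we have $k_1+k_2=1$ and $1-b_1=1.09\times10^{-10}$. So once $n_1\geqslant 10^{10}$, your point has $p+q<1-b_1$ and is not in ${\boldsymbol S}_1$. Both atoms then lie in $[P+Q\leqslant 1-b_1]$, and the prior violates \textbf{PK}\ref{PK4}. In the intermediate regime $1-b_1<\frac{k_1+k_2}{n}\leqslant b$ the prior is admissible, but it gives posterior confidence $1$, not $0$.

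The repair is one line. Put the mass $1-a$ at any $(p_1,q_1)$ with $p_1,q_1>0$ and $\max\{b,1-b_1\}<p_1+q_1<1$; this is possible since $b<1$ and $b_1>0$. Then $L(p_1,q_1)>0$, the denominator is positive, and the numerator is $a\,L(0,q_2)=0$. You do have the essential idea right: under \textbf{PK}\ref{PK1} the axes meet ${\boldsymbol S}_2$, so $L$ vanishes at, e.g., $(0,l)$.

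\textbf{Two smaller slips.} First, for $k_1=k_2=0$ the minimiser of $(1-p-q)^n$ on ${\boldsymbol S}_2$ lies on $p+q=1-b_1$. It is not at the dots of Fig.~\ref{fig_preferred_locations_gen_CBI_samePKs}: those sit on the axes at $p+q=l$, where $L$ is maximal when $k=0$, and they illustrate the typed-failure case. Second, your branch $1-b_1>b$ is harmless but vacuous, since the constraint $0<1-b\leqslant b_1$ already forces $b\geqslant 1-b_1$.

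\textbf{Comparison with the paper's route.} The paper's proof (Appendix~\ref{subsec_proof_Theorem4}, applied ``analogously'' here) first invokes an external reduction result (Theorem~1 of \cite{moreno1991robust}, or \cite{salako2026fixedpointcharacterisationsextremaldistributions}). This restricts attention to weak limits of priors with one atom in each ${\boldsymbol S}_i$. A Dinkelbach argument on $f-\phi g$ then places the atoms at the $L$-minimiser in ${\boldsymbol S}_2\subseteq[P+Q\leqslant b]$ and the $L$-maximiser in ${\boldsymbol S}_1\cap[P+Q>b]$. The zero case falls out because the $L$-minimiser over ${\boldsymbol S}_2$ lies on an axis, so $L_{2*}=0$.

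You instead bound every prior in $\mathcal D$ directly, via $X\geqslant aL_{2*}$, $Y\leqslant(1-a)L_1^*$ and the monotonicity of $X/(X+Y)$. You then match the bound with explicit two-point priors, or limits of them. This is self-contained and needs no reduction theorem. It works precisely because \textbf{PK}\ref{PK4} pins the masses of both regions and ${\boldsymbol S}_2\subseteq[P+Q\leqslant b]$.

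Your sandwich also covers the typed-failure case uniformly: there $L_{2*}=0$, so the lower bound is trivially $0$ and only your (repaired) explicit prior is needed. The paper's framework is heavier here, but it extends to constraint sets where mass can move among more regions.
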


Like Theorem~\ref{Thrm_gen_CBI_samePKs}, the next theorem also solves CBI problem \eqref{eqn_lessabstract_CBI_problem}, but with \textbf{PK}\ref{PK1} replaced by the more detailed \textbf{PK}\ref{PK5}.

\begin{theorem}
\label{Thrm_gen_CBI_diffPKs}
%\noindent\textbf{Problem:}

\noindent For $n,\,k_1,\,k_2$, $a$, $b$, $b_1$,  $l_1$, $l_2$, $P$, $Q$, $\mathcal{D}$ as already defined and constrained, we seek 
\begin{align} &\underset{\mathcal{D}}{\inf}\, \dfrac{\mathbb E[L(n,k_1,k_2; P, Q)\mathbf{1}_{P+Q\leqslant b}]}{\mathbb E[L(n,k_1,k_2; P, Q)]}   \label{eqn_gen_CBI_problem_diffPKs}\\
    s.t. \,\,&\text{\bf PK}\ref{PK4},\,\text{\bf PK}\ref{PK5}  \nonumber 
\end{align}

\noindent\textbf{Solution:} 
\noindent Let  $L_{i}:=L(n,k_1,k_2; p_{i},q_{i})$ for $(p_{i},q_{i})\in {\boldsymbol S_{i}}$ in Fig.~\ref{fig_omega_partition_gen_CBI_simple}. The infimum $\Phi^*$ takes the form
\begin{align*}
 \Phi^*=&\frac{aL_{2*}}{aL_{2*}+(1-a)L^*_{1}}{\boldsymbol 1}_{1-b_1\leqslant b}
\end{align*}
That is, objective function \eqref{eqn_gen_CBI_problem_diffPKs} attains its infimum, $\Phi^*$, with  a discrete prior distribution over $\Omega$ of the form (e.g. see Fig.~\ref{fig_preferred_locations_gen_CBI_1}):    
\begin{align*}
    \mathbb P(P=p_i,Q=q_i)=\begin{cases}
    1-a, & \text{ if } i=1\\
    a, & \text{ if } i=2%;\\
           %0, & \text{ otherwise}
    \end{cases}
\end{align*}
\end{theorem}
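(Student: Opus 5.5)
The plan follows the standard CBI template and reduces the infimum over $\mathcal D$ to an optimisation over two-point priors. Under \textbf{PK}\ref{PK4} and \textbf{PK}\ref{PK5}, the feasible region is $\{p\geqslant l_1,\,q\geqslant l_2,\,p+q\leqslant 1\}$. Fig.~\ref{fig_omega_partition_gen_CBI_simple} partitions it into ${\boldsymbol S}_2=[P+Q\leqslant 1-b_1]$, which carries mass $a$, and ${\boldsymbol S}_1=[P+Q>1-b_1]$, which carries mass $1-a$.

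First I will dispose of the case $1-b_1>b$. Then ${\boldsymbol S}_1$ contains points with $p+q\leqslant b$, and also points in a neighbourhood of $(\frac{k_1}{n},\frac{k_2}{n})$ with $p+q>b$. I will construct priors that put mass $1-a$ on the latter and mass $a$ near a point of ${\boldsymbol S}_2$ with arbitrarily small likelihood, or that use the freedom inside ${\boldsymbol S}_1$, to drive the objective to $0$. The indicator ${\boldsymbol 1}_{1-b_1\leqslant b}$ accounts for this case.

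When $1-b_1\leqslant b$, all of ${\boldsymbol S}_2$ lies in $[P+Q\leqslant b]$. Write the objective as
\[
\frac{a\,\mathbb E[L\mid {\boldsymbol S}_2]+(1-a)\,\mathbb E[L\mathbf 1_{P+Q\leqslant b}\mid{\boldsymbol S}_1]}{a\,\mathbb E[L\mid{\boldsymbol S}_2]+(1-a)\,\mathbb E[L\mid{\boldsymbol S}_1]}.
\]
Because $x\mapsto\frac{x+c}{x+d}$ is increasing in $x$ when $c\leqslant d$, moving the ${\boldsymbol S}_1$ mass from $[P+Q\leqslant b]$ into $[P+Q>b]$ only lowers the objective. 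Once that is done, the objective has the form $\frac{aA}{aA+(1-a)B}$. This is decreasing in $B=\mathbb E[L\mid {\boldsymbol S}_1\cap[P+Q>b]]$, so $B$ is bounded above by $L_1^*$, and increasing in $A=\mathbb E[L\mid{\boldsymbol S}_2]$, so $A$ is bounded below by $L_{2*}$. This yields $\Phi^*\geqslant \frac{aL_{2*}}{aL_{2*}+(1-a)L_1^*}$. Point masses at the maximiser and minimiser attain the bound, and they satisfy both PKs, which gives the stated two-point prior.

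The remaining work is to show that these extrema are attained and to locate them. Here is the contrast with Theorem~\ref{Thrm_gen_CBI_samePKs}. There, under \textbf{PK}\ref{PK1}, ${\boldsymbol S}_2$ touches the axes, so $L_{2*}=0$ whenever $k_1\geqslant 1$ or $k_2\geqslant 1$. Here, \textbf{PK}\ref{PK5} keeps $p\geqslant l_1>0$ and $q\geqslant l_2>0$, so on the compact set ${\boldsymbol S}_2$ the continuous likelihood $L(n,k_1,k_2;p,q)=p^{k_1}q^{k_2}(1-p-q)^{n-k_1-k_2}$ is strictly positive and attains its minimum. The function $\log L$ is concave, so its minimum over the polygon ${\boldsymbol S}_2$ sits at a vertex: $(l_1,l_2)$, $(1-b_1-l_2,l_2)$ or $(l_1,1-b_1-l_1)$. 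Comparing $L$ at these vertices, using the assumption that $\frac{k_1}{n}$ and $\frac{k_2}{n}$ exceed $1-b_1$, should pin down $(p_2,q_2)$; Fig.~\ref{fig_preferred_locations_gen_CBI_1} suggests it is $(l_1,l_2)$.

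For $L_1^*$, the unconstrained maximiser is the MLE $(\frac{k_1}{n},\frac{k_2}{n})$. When $k_1+k_2>nb$ it lies in ${\boldsymbol S}_1\cap[P+Q>b]$, so $(p_1,q_1)$ is the MLE. Otherwise the supremum is approached on the line $p+q=b$, and since that line is excluded, $L_1^*$ is a supremum that may not be attained. I expect this to be the main obstacle, and I would handle it by a limiting sequence of priors that achieves the infimum in the limit. A second obstacle is the case $1-b_1>b$, where I must check carefully that the infimum really is $0$ and not merely small. I would look first for a sequence of ${\boldsymbol S}_2$ points whose likelihood ratio to the ${\boldsymbol S}_1$ mass vanishes. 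If none exists, I would reinterpret the indicator convention as in the proof of Theorem~\ref{Thrm_CBI_Bernoulli}.
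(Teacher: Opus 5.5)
\textbf{Main case ($1-b_1\leqslant b$): correct, but by a different route from the paper.}

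The paper's proof has three steps:
\begin{itemize}
\item It sets $f=L\mathbf 1_G$ and $g=L$.
\item It invokes Theorem~1 of \cite{moreno1991robust} and Proposition~2.2 of \cite{salako2026fixedpointcharacterisationsextremaldistributions} to restrict to (limits of) priors with one support point in each of ${\boldsymbol S}_1$ and ${\boldsymbol S}_2$.
\item It then minimises the Dinkelbach difference $h_\phi=f-\phi g$. On ${\boldsymbol S}_2\subseteq G$ this means minimising $(1-\phi)L$. On ${\boldsymbol S}_1$, the sign of $h_\phi$ forces the support point into $G^c$, where one maximises $L$.
\end{itemize}

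You instead bound the objective directly for an arbitrary admissible prior. Since \textbf{PK}\ref{PK4} pins $\mathbb P({\boldsymbol S}_2)=a$ and $\mathbb P({\boldsymbol S}_1)=1-a$, and ${\boldsymbol S}_2\subseteq G$, the objective is $u/(u+v)$ with
\begin{itemize}
\item $u=a\,\mathbb E[L\mid{\boldsymbol S}_2]+\mathbb E[L\mathbf 1_{{\boldsymbol S}_1\cap G}]\geqslant aL_{2*}$,
\item $v=\mathbb E[L\mathbf 1_{{\boldsymbol S}_1\cap G^c}]\leqslant(1-a)L_1^*$.
\end{itemize}
Monotonicity of $u/(u+v)$ then gives the bound. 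This is the rigorous content of your ``moving mass'' step, and it is cleaner stated this way.

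Your route is elementary and needs no external reduction-to-discrete-priors theorem. The paper's route is heavier, but it belongs to a machinery that still works when several overlapping PKs leave the cell masses free. In that setting your two-line monotonicity argument would not suffice. Both approaches handle non-attainment of $L_1^*$ at the $b$-line through limits of feasible priors.

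Locating $(p_2,q_2)$ is not required, because the statement defines $L_{2*}$ and $L_1^*$ as an infimum and a supremum. Your vertex guess $(l_1,l_2)$ is also parameter-dependent. For example, with $k_1\geqslant 1$ and $k_2=0$, $L$ decreases in $q$ on ${\boldsymbol S}_2$, and the minimiser is $(l_1,1-b_1-l_1)$.

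\textbf{Case $1-b_1>b$: the geometry is reversed.}

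In this case ${\boldsymbol S}_1=[p+q>1-b_1]\cap\Omega$ lies entirely above the $b$-line. It has no points with $p+q\leqslant b$, so there is no freedom inside ${\boldsymbol S}_1$ to exploit. It is ${\boldsymbol S}_2$ that straddles the $b$-line.

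Your mechanism, putting mass $a$ at a point of ${\boldsymbol S}_2$ with small likelihood, fails under \textbf{PK}\ref{PK5} with $l_1,l_2>0$. As long as that mass sits in $[p+q\leqslant b]$, the ratio is at least $a\underline L/(a\underline L+(1-a)\overline L)>0$. Here $\underline L=\min_{{\boldsymbol S}_2}L>0$, as you note yourself, and $\overline L=L(n,k_1,k_2;\frac{k_1}{n},\frac{k_2}{n})$ is the global maximum.

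The correct argument is one line. Put mass $a$ at a point of ${\boldsymbol S}_2\cap[p+q>b]$, e.g.\ $(l_1,1-b_1-l_1)$, which is feasible since $l_1+l_2\leqslant 1-b_1$. Put mass $1-a$ at any point of ${\boldsymbol S}_1$ with positive likelihood. The numerator is then exactly $0$.

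In any case, the standing constraints of Section~\ref{subsec_paramconstraints} impose $0<1-b\leqslant b_1$. So this case is excluded by hypothesis and the indicator is identically $1$. This is why the paper's proof treats only $1-b_1\leqslant b$. No reinterpretation of the indicator convention is needed.
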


Theorem~\ref{Thrm_gen_CBI_diffPKs} is proved in Appendix~A; the other theorems are proved using analogous steps. The $\Phi^\ast$ solutions for \eqref{eqn_Ber_CBI_problem} through  \eqref{eqn_gen_CBI_problem_diffPKs} may have the same functional form, but not necessarily the same values; the locations $(p_i,q_i)$ and likelihood $L_i$ used for each $\Phi^\ast$ differ, depending on the objective function, \textbf{PK}s, and parameter values. Figs.~\ref{fig_CBI_Ber_same_vs_diffPKs}--\ref{fig_CBI_gen_same_vs_diffPKs} show examples of ``\emph{worst-case}'' prior distributions that solve \eqref{eqn_Ber_CBI_problem}--\eqref{eqn_gen_CBI_problem_diffPKs} for particular parameter ranges. These discrete distributions are depicted ``from above'', looking down on the distributions and their common domain $\Omega$. Each black dot is a location---in the relevant partition of $\Omega$ (i.e. Figs.~\ref{fig_omega_partition_Ber_1}, \ref{fig_omega_partition_gen_CBI_simple})---that is assigned a probability mass consistent with the assessor's \textbf{PK} beliefs stated in the relevant theorem.  

These results extend previous CBI solutions. In the appropriate limits, the marginals of these two dimensional worst--case priors recover the one dimensional CBI worst--case priors in \cite{zhao_assessing_2019}; e.g. when no failures are observed (see Section~\ref{sec_applications}).

%The forms of these worst-case joint priors are governed by the MLE for each relevant likelihood: in summary, the MLE acts as an ``attractor'' of preferred locations (for those ${\boldsymbol S}_i$ subsets that lie above the $b$ line) or "repeller" of preferred locations (for subsets that lie below the $b$ line). See Fig.~17 in Appendix~A. Such governing points are a general structural property of  worst-case/conservative priors that explains \emph{how}/\emph{why} worst-case priors achieve conservative results \cite{SalakoMuhammad2025}.

Strictly speaking, worst-case priors may not be feasible, because they may assign probability mass to locations that violate one or more \textbf{PK}s. Instead, worst-case priors should be viewed as limits of feasible priors, and their preferred locations as so-called \emph{limit points} of the respective ${\boldsymbol S}_i$ subsets (see \cite{salako2026fixedpointcharacterisationsextremaldistributions}). 

%\subsection{Conservative Assessor Beliefs}
%\label{subsec_consasessorbeliefs}
Each Theorem's worst--case prior encodes the most skeptical beliefs an assessor can have; beliefs about what the unknown probabilities, $P$, $Q$, might be. Figs.~\ref{fig_CBI_Ber_same_vs_diffPKs}--\ref{fig_CBI_gen_same_vs_diffPKs} show that the ``\emph{most skeptical}'' assessor believes the observed failure behavior of the software can arise in two ways: either the software has failure probability values $(p_1,q_1)$ that make this behavior \emph{as likely as possible}\footnote{That is, the $(P,Q)$-value where the likelihood is biggest.} when the software \emph{is not} sufficiently accurate (i.e. when $[P+Q> b]$), or the software has failure probability values $(p_2, q_2)$ that make this behavior \emph{as unlikely as possible} when the software \emph{is} sufficiently accurate (i.e. when $[P+Q\leqslant b]$). This extreme skepticism depends on the software's observed behavior, so an assessor cannot choose to be this skeptical before observing the software in operation. But, the point of these worst--case priors is not that an assessor \emph{should} hold these beliefs, but that an assessor \emph{could} have held these beliefs before observing the software (since these beliefs are consistent with their \textbf{PK}s). And, \emph{had} they held these beliefs, their doubts about the software's accuracy (after observing the software) would nevertheless be justified.

\section{Application to AV Safety Assessment}
\label{sec_applications}
The following assessment scenarios adapted from \cite{zhao_assessing_2019} illustrate the impact of not accounting for multiple failure--types. %In each scenario, AV safety software is required to interpret streams of sensor/visual data and determine whether the AV has entered into an unsafe driving state.  
\begin{scenario}[no failures]
\label{example_nofailures}     
Consider an AV safety-monitoring function (e.g. an unsafe-driving-state detector in the monitoring channel) that contributes to an \emph{ASIL~D} safety goal in an ISO~26262 safety case. An assessor must be at least $95\%$ \emph{posterior confident} that the monitor's pfc satisfies an upper bound $b$. The assessor's pre-operational knowledge is encoded in \textbf{PK}s, including that neither failure mode can be arbitrarily unlikely (e.g.\ $P,Q\geqslant 10^{-15}$) and, prior to observing operational data, the assessor is $90\%$ confident that the pfc meets the development target $1-b_1=1.09\times 10^{-10}$ (equivalently, accuracy $\Theta\geqslant b_1=1-1.09\times 10^{-10}$). The AV then accumulates $n$ classifications in the ODD without failure.
\end{scenario}
\begin{figure}[t!]
	\centering
    \includegraphics[width=0.9\linewidth]{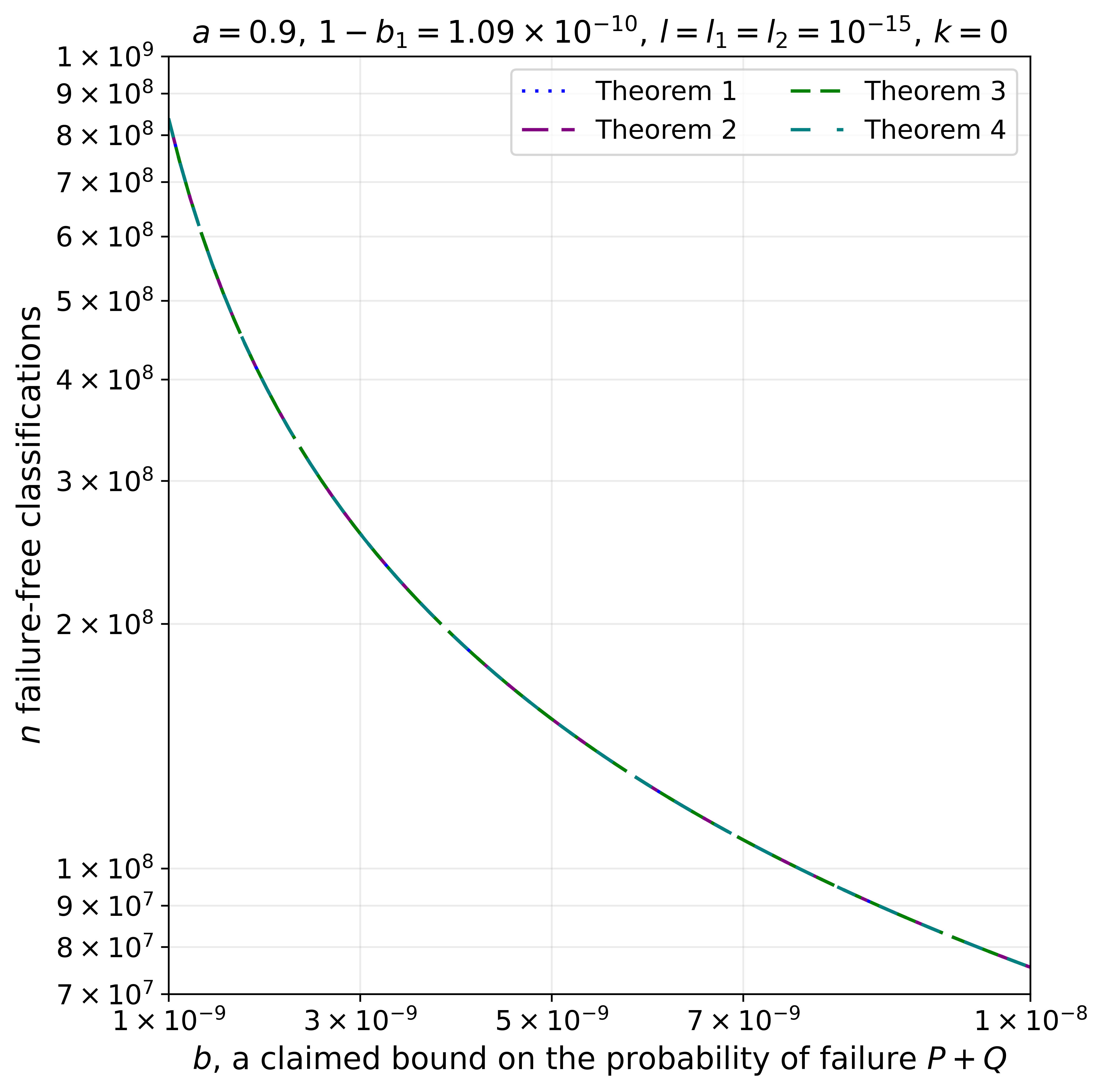}
	\caption{  The number of failure-free classifications needed for $95\%$ confidence in the software accuracy satisfying $\Theta\geqslant 1-b$.}
	\label{fig_NumfailurefreeClassNeeded_UHRel}
\end{figure}
\textbf{How many failure-free classifications are required to, conservatively, attain the required $\boldsymbol{95\%}$ confidence level?}  Fig.~\ref{fig_NumfailurefreeClassNeeded_UHRel} illustrates how the required number of failure-free classifications is \emph{the same} for different conservative assessors. The number is not affected by differences  between \textbf{PK}\ref{PK1} and the ``more restrictive'' \textbf{PK}\ref{PK5}, nor is it affected by the possibility of different types of failure occurring. This is consistent with the univariate ``no-failures'' CBI result in \cite{zhao_assessing_2019}. Consequently, the different theorems' curves overlap perfectly. %This agreement still holds if the assessors are interested in less stringent \emph{pfc} claims, as depicted in Fig.~\ref{fig_NumfailurefreeClassNeeded_notUHRel}. 
%In contrast, what happens when a single failure is observed?    

%\begin{figure}[t!]
%	\centering
	%\includegraphics[width=1.0\linewidth]{Images/fig_NumfailurefreeClassNeeded_notUHRel.png}
%    \includegraphics[width=1.0\linewidth]{Images/Fig11.png}
%	\caption{\textit{The number of failure-free classifications needed to justify $95\%$ confidence in the software accuracy satisfying $\Theta\geqslant 1-b$.}}
%	\label{fig_NumfailurefreeClassNeeded_notUHRel}
%\end{figure}

\begin{scenario}[a single failure]
\label{example_1failure}
Assume Scenario~\ref{example_nofailures} and suppose that after $n_1$ failure-free classifications the conservative posterior confidence meets the $95\%$-confidence acceptance criterion. The monitor then fails on the $(n_1+1)$-th classification, causing the assessor's confidence to drop. 
\end{scenario}

\textbf{How many more failure-free classifications, $\boldsymbol n_2$, must the software complete in order for the assessor's confidence to return to $\boldsymbol{95\%}$?} It depends on the assessor's prior knowledge and whether the types of failures that occur during operation are known. Fig.~\ref{fig_n2vsn1_comparisons} compares the $(n_1, n_2)$ values for different conservative assessors. No curve is plotted for Theorem~\ref{Thrm_gen_CBI_samePKs} because this assessor's confidence \emph{drops to zero upon seeing a single failure}---for example, if $k_1=1$ and $k_2=0$, then the posterior confidence \eqref{eqn_gen_CBI_problem_samePKs} is zero, using the prior in Fig.~\ref{fig_preferred_locations_gen_CBI_samePKs} with $(p_2,q_2)=(0,l)$. Consequently, after a single failure, no finite amount of failure-free operational evidence will suffice for the assessor to regain their lost confidence: $n_2$ for Theorem~\ref{Thrm_gen_CBI_samePKs} is infinity, dwarfing the other theorems' curves.

The infinite $n_2$ from Theorem~\ref{Thrm_gen_CBI_samePKs} is due to \textbf{PK}\ref{PK1}---it allows for the conservative belief that failures consisting of only one type are extremely unlikely, \emph{when the pfc is smaller than $b$}. Using \textbf{PK}\ref{PK5} instead, in Theorem~\ref{Thrm_gen_CBI_diffPKs}, gives finite $n_2$ for all $n_1$.

The larger the value of $n_1$, the smaller the \emph{pfc} bound $b$ that $n_1$ ultimately justifies having $95\%$ confidence in. So, it is unsurprising that the number $n_2$ of required additional classifications also increases. This is apparent from Theorem~\ref{Thrm_gen_CBI_diffPKs}'s monotonically increasing curve, but less apparent from Theorem~\ref{Thrm_CBI_Bernoulli} and \ref{Thrm_CBI_Bernoulli_diffPKS}'s curves---for all sufficiently large $n_1$, these two curves are monotonically-increasing and bounded. The shape of Theorem~\ref{Thrm_CBI_Bernoulli} and \ref{Thrm_CBI_Bernoulli_diffPKS}'s curves mirror the analogous univariate CBI problem: see Sec.~III-B and App.~B of \cite{zhao_assessing_2019}. The ``hump'' is a result of a change in the preferred location in ${\boldsymbol S}_2$  (Figs.~\ref{fig_omega_partition_Ber_1} and \ref{fig_omega_partition_gen_CBI_simple}) at about $n_1\approx 10^{11}$ and  $n_2\overset{n_1\rightarrow\infty}{\longrightarrow}\frac{b_1}{1-b_1}$; see Appendix~B for a proof adapted from \cite{zhao_assessing_2019}. 

\begin{figure}[t!]
	\centering
	\includegraphics[width=0.9\linewidth]{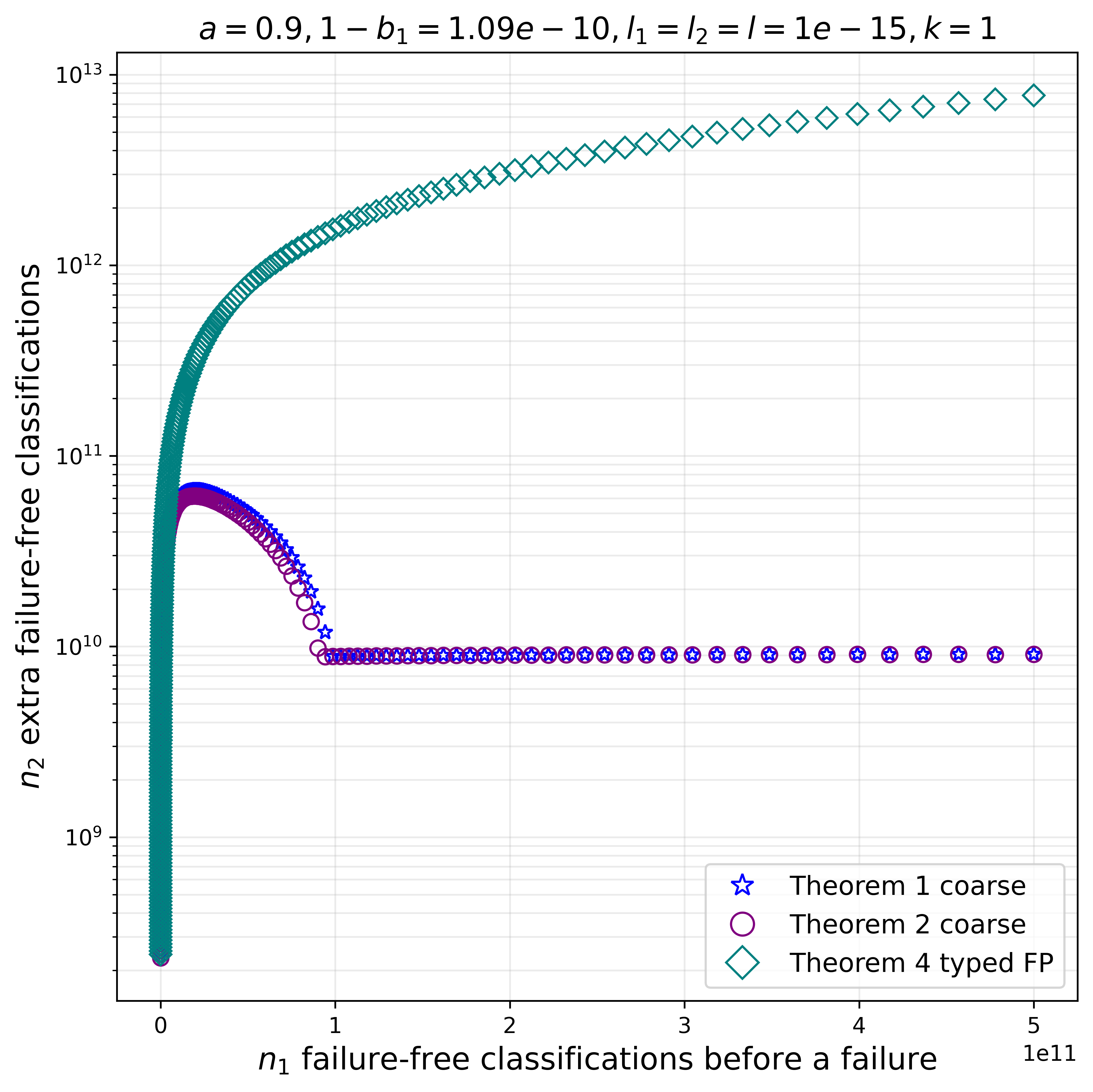}
	\caption{ The extra failure-free classifications needed to regain lost confidence from a single failure. For each $n_1$, the bound $b$ is deduced---all four theorems give the same $b$ under failure-free operation---from which $n_2$ is determined. Theorem~\ref{Thrm_gen_CBI_samePKs} is omitted because $n_2=\infty$ for all $n_1$ (see main text for details).}
	\label{fig_n2vsn1_comparisons}
\end{figure}

\section{Discussion}
\label{sec_discussion}
\subsection{Impact of Uncertainty from Failure--data Fidelity}
\label{subsec_ImpactofFidelity}
Uncertainty about failure--data does not necessarily undermine conservatism in assessments---but, when it does, this can be significant. When no failures are observed, Fig.~\ref{fig_NumfailurefreeClassNeeded_UHRel} shows that Theorems~\ref{Thrm_CBI_Bernoulli}, \ref{Thrm_CBI_Bernoulli_diffPKS}, \ref{Thrm_gen_CBI_samePKs}, \ref{Thrm_gen_CBI_diffPKs} are all equally conservative, despite ignorance about which failure types have occurred in Theorems~\ref{Thrm_CBI_Bernoulli}, \ref{Thrm_CBI_Bernoulli_diffPKS}. This is consistent with our analytical focus on the impact of failure-type uncertainty. However, when failures \emph{are} observed, Fig.~\ref{fig_n2vsn1_comparisons} depicts several orders of magnitude differences between the theorems' results. After a single failure, the ``optimism gap'' between Theorem~\ref{Thrm_gen_CBI_samePKs} and the other three theorems is infinite, as no finite amount of additional failure--free operation can restore 95\% confidence. The gap between Theorem~\ref{Thrm_gen_CBI_diffPKs} and the remaining two theorems grows in the number of extra successes $n_2$ needed following an initial $n_1$ observed successes and a single failure: at $n_1\approx 9\times 10^{11}$, Theorem~\ref{Thrm_gen_CBI_diffPKs} requires $n_2\approx 10^{13}$ which is three orders of magnitude greater than the alternative theorems. These results demonstrate that not accounting for multiple failure modes can lead to concluding that confidence can be recovered after a failure and the evidence requirements to do this are modest, whereas a mode--aware conservative analysis can conclude recovery is vastly more difficult or, even, infeasible. Also note, Theorem~\ref{Thrm_gen_CBI_diffPKs}'s curve is a worst-case result, so the vertical ``gap'' between its curve and the other curves represents posterior confidence from \emph{infinitely many} non-conservative prior distributions that require more failure-free operation (when failure-types are accounted for) than the failure-free operation required by Theorems~\ref{Thrm_CBI_Bernoulli}, \ref{Thrm_CBI_Bernoulli_diffPKS}.

Although incorporating multiple failure modes can lead to more conservative conclusions, Theorems~\ref{Thrm_gen_CBI_samePKs} and \ref{Thrm_gen_CBI_diffPKs} have different sensitivities to observed failures. These differences arise from the relative evidential strength of \textbf{PK}\ref{PK1} and \textbf{PK}\ref{PK5}: the \textbf{PK}s represent evidence that can, or cannot, rule out extremely undesirable failure behavior. For example, using Theorem~\ref{Thrm_gen_CBI_samePKs}, an assessor’s grounds for believing the classifier's failure probability is bounded (i.e., \textbf{PK}\ref{PK1}) remain compatible with the classifier never making, exclusively, either FN or FP errors (i.e. in Fig.~\ref{fig_omega_partition_Ber_1}, the $\Omega$ region has the axes as boundaries). Under this residual possibility, even a single observed FN or FP failure of the software can constitute strong evidence against that failure-type impossibility, and correspondingly strong evidence in favor of substantial unreliability---consistent with the interpretation that the successes observed (not observing the alleged impossible failure mode) are largely due to chance. This impossibility of certain failure modes is ruled out if evidence suggests the classifier's individual FN and FP probabilities cannot be smaller than some non-zero values at the limits of what has been demonstrated to be practically achievable (e.g. based on demonstrated past performance of classifiers using similar algorithms/methods in similar applications, so $\Omega$'s boundaries do not coincide with axes in Fig.~\ref{fig_omega_partition_gen_CBI_simple}).

Using CBI, an assessor can question the strength of reliability evidence---a form of principled ``belief refinement'' in the assessment process. An assessor specifies increasingly more detailed beliefs as more evidence becomes available, with CBI constantly providing the most critical interpretation of the evidence. Conceptually, as more evidence becomes available, the assessor is narrowing the set of prior distributions in $\mathcal D$ that are consistent with the evidence. In contrast, fully specified prior distributions in traditional Bayesian inference leave no room for belief refinement, other than by observational evidence.

The assessor who uses Theorems~\ref{Thrm_CBI_Bernoulli}, \ref{Thrm_CBI_Bernoulli_diffPKS}, is attempting to be conservative, when faced with uncertainty about which failure types have occurred. Sometimes such attempts \emph{are} conservative, and sometimes they \emph{aren't}. The fact---that CBI results that do not account for the extra failure modes (Theorems~\ref{Thrm_CBI_Bernoulli}, \ref{Thrm_CBI_Bernoulli_diffPKS}) are not necessarily conservative, compared with CBI results that take those failure modes into account (Theorems~\ref{Thrm_gen_CBI_samePKs}, \ref{Thrm_gen_CBI_diffPKs})---is consistent with CBI's conservatism guarantee. By design/definition, a prior distribution using the same evidence and statistical model (i.e. likelihood function) cannot yield a more conservative value for the posterior probability of interest. Nevertheless, if the evidence or statistical model differs from that used for a CBI solution, the guarantee no longer applies, in line with  Popov's findings \cite{Popov2025BlackBoxBayesianSafetyAssessment}. 

\begin{figure}[!t]
    \centering
	\includegraphics[width=0.9\linewidth]{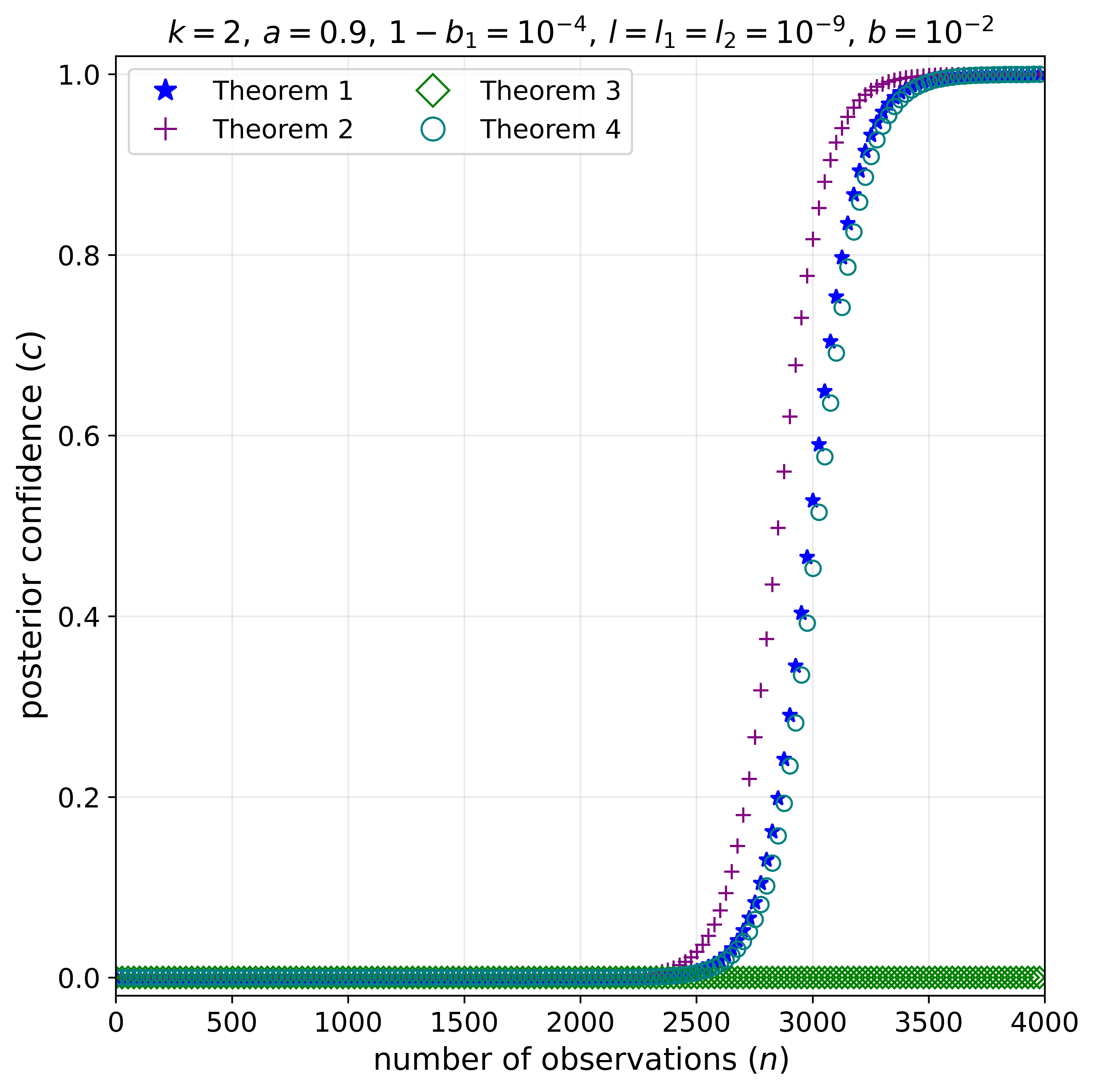}
	\caption{ Conservative posterior confidence guaranteed by  \eqref{eqn_CBIremainsConservativeWithUncertainty}}
	\label{fig_inequality_10}
\end{figure}

In \cite{Popov2025BlackBoxBayesianSafetyAssessment}, Popov compares two Bayesian AV assessment approaches---a ``white box'' approach that accounts for the various operational modes of an AV system, and a ``black box'' approach that uses CBI but does not account for the operational modes. The ``black box'' CBI results were not always conservative compared with the ``white box'' results. The CBI results in \cite{Popov2025BlackBoxBayesianSafetyAssessment} would have been conservative compared with the ``white box'' setting if the operational modes had been accounted for. We can exemplify an analogue in the present setting. Notice how the objective function in \eqref{eqn_Ber_CBI_problem} is a convex combination of objective functions from \eqref{eqn_gen_CBI_problem_samePKs}:
\begin{align*}
&\ \frac{\mathbb E[L(n,k; P, Q)\mathbf{1}_{P+Q\leqslant b}]}{\mathbb E[L(n,k; P, Q)]}\allowdisplaybreaks\\
%=&\ \frac{\sum_{k_1=0}^k \binom{k}{k_1},\mathbb E[L(n,k_1,k-k_1;P,Q)\mathbf{1}_{P+Q\leqslant b}]}
%{\sum_{k_1=0}^k \binom{k}{k_1},\mathbb E[L(n,k_1,k-k_1;P,Q)]}\allowdisplaybreaks\\
=&\ \sum_{k_1=0}^k w_{k_1}\frac{\mathbb E[L(n,k_1,k-k_1; P, Q)\mathbf{1}_{P+Q\leqslant b}]}{\mathbb E[L(n,k_1,k-k_1; P, Q)]} 
\end{align*}
where $w_{k_1}:=\frac{\binom{k}{k_1}\,\mathbb E[L(n,k_1,k-k_1;P,Q)]}
{\sum_{j=0}^k \binom{k}{j}\,\mathbb E[L(n,j,k-j;P,Q)]}$. Consequently, Theorem~\ref{Thrm_CBI_Bernoulli}'s infimum can always be bounded below by the smallest infima given by Theorem~\ref{Thrm_gen_CBI_samePKs} for some $k_1$. That is,
\begin{align}
&\ \ \ \underset{k_1=0,\ldots,k}{\inf}\underset{\mathcal{D}}{\inf}\, \dfrac{\mathbb E[L(n,k_1,k-k_1; P, Q)\mathbf{1}_{P+Q\leqslant b}]}{\mathbb E[L(n,k_1,k-k_1; P, Q)]}\nonumber\allowdisplaybreaks \\
=&\ \ \ \ \ \ \ \underset{\mathcal{D}}{\inf}\underset{k_1=0,\ldots,k}{\inf}\, \dfrac{\mathbb E[L(n,k_1,k-k_1; P, Q)\mathbf{1}_{P+Q\leqslant b}]}{\mathbb E[L(n,k_1,k-k_1; P, Q)]}\nonumber\allowdisplaybreaks \\
\leqslant&\ \ \ \ \ \ \ \underset{\mathcal{D}}{\inf}\, \dfrac{\mathbb E[L(n,k; P, Q)\mathbf{1}_{P+Q\leqslant b}]}{\mathbb E[L(n,k; P, Q)]}
%\leqslant&\ \underset{\mathcal{D}}{\inf}\underset{k_1=0,\ldots,k}{\sup}\, \dfrac{\mathbb E[L(n,k_1,k-k_1; P, Q)\mathbf{1}_{P+Q\leqslant b}]}{\mathbb E[L(n,k_1,k_2; P, Q)]}
\label{eqn_CBIremainsConservativeWithUncertainty}
\end{align}
The interchange of infima in \eqref{eqn_CBIremainsConservativeWithUncertainty} is allowed because the priors in $\mathcal D$ and the $k_1$ values are independent, so the iterated infima are an infimum over the Cartesian product $\{0,\ldots,k\}\times\mathcal D$ (see Appendix~C). An analogous inequality holds between the infima of Theorems~\ref{Thrm_CBI_Bernoulli_diffPKS} and \ref{Thrm_gen_CBI_diffPKs}. In summary, when there is certainty about operational evidence (such as what failure modes occurred), this should be incorporated in the CBI analyses to ensure conservatism. Otherwise, if there \emph{is} uncertainty, using \eqref{eqn_CBIremainsConservativeWithUncertainty} ensures the CBI--based assessment remains conservative.

Inequality \eqref{eqn_CBIremainsConservativeWithUncertainty} is illustrated in Fig.~\ref{fig_inequality_10} with $k=2$ failures. Posterior confidence from Theorems~\ref{Thrm_CBI_Bernoulli}, \ref{Thrm_CBI_Bernoulli_diffPKS}, \ref{Thrm_gen_CBI_diffPKs} grows as the number of failure-free classifications grows. While, confidence from Theorem \ref{Thrm_gen_CBI_samePKs} remains at 0. Due to inequality \eqref{eqn_CBIremainsConservativeWithUncertainty}, at each $n$, Theorem~\ref{Thrm_CBI_Bernoulli}'s confidence curve is bounded below by ``\emph{no confidence}'' from Theorem~\ref{Thrm_gen_CBI_samePKs}. Similarly, Theorem~\ref{Thrm_CBI_Bernoulli_diffPKS}'s curve is bounded below by the minimum confidence from Theorem~\ref{Thrm_gen_CBI_diffPKs} under three pairs of $(k_1,k_2)$ values: $(0,2)$, $(1,1)$, $(2,0)$. 

Fig.~\ref{fig_inequality_10} also illustrates the general ordering: for each $n>2$, Theorem~\ref{Thrm_CBI_Bernoulli_diffPKS} gives the least conservative confidence, then Theorem~\ref{Thrm_CBI_Bernoulli}, then the minimum of Theorem~\ref{Thrm_gen_CBI_diffPKs}'s confidence values, and finally the minimum of Theorem ~\ref{Thrm_gen_CBI_samePKs}'s confidence values.

\begin{figure*}[t!] 
\resizebox{0.9\linewidth}{!}{%
  \centering
  \begin{tikzpicture}[
    font=\large,
    node distance=12mm and 14mm,
    stage/.style={
      draw,
      rounded corners,
      minimum height=9mm,
      text width=27mm,
      align=center,
      fill=gray!15,
      fill opacity=1,
      text opacity=1
    },
    % --- Theorem boxes: slightly wider + no hyphenation ---
    theobox/.style={
      draw,
      rounded corners,
      align=left,
      text width=58mm,          % adjust as you like
      minimum height=10mm,
      font=\large,
      fill=white,
      fill opacity=1,
      text opacity=1,
      execute at begin node={%
        \begingroup
        \hyphenpenalty=10000\relax
        \exhyphenpenalty=10000\relax
      },
      execute at end node={%
        \endgroup
      }
    },
    % --- PK / data boxes: slightly wider + no hyphenation ---
    databox/.style={
      draw,
      rounded corners,
      align=left,
      text width=72mm,          % adjust as you like
      minimum height=10mm,
      font=\large,
      fill=white,
      fill opacity=1,
      text opacity=1,
      execute at begin node={%
        \begingroup
        \hyphenpenalty=10000\relax
        \exhyphenpenalty=10000\relax
      },
      execute at end node={%
        \endgroup
      }
    },
    groupbox/.style={     % surrounding boxes: dashed, no fill
      draw,
      dashed,
      rounded corners,
      thick,
      inner sep=4mm,
      fill opacity=0,
      text opacity=1
    },
    arrow/.style={        % thin arrows (e.g., lifecycle steps)
      -{Latex[length=3mm,width=2mm]},
      line width=0.8pt
    },
    grouparrow/.style={   % thicker arrows between group boxes / PK / data / theorems
      -{Latex[length=5mm,width=3mm]},
      line width=1.4pt
    }
  ]

  % Clip to remove excess white space at bottom
  \clip (-6.0,-3.0) rectangle (27.0,7.5);

  % Layers: background (boxes+arrows) behind main nodes
  \pgfdeclarelayer{bg}
  \pgfsetlayers{bg,main}

  %--------------------------------------------------------------------
  % Top row: ISO 26262-style software lifecycle stages
  %--------------------------------------------------------------------
  \node[stage] (concept) {Concept \&\\HARA};
  \node[stage, right=of concept] (sysreq) {System \&\\Technical\\Safety Reqs};
  \node[stage, right=of sysreq] (swreq)  {SW safety\\reqs \&\\architecture};
  \node[stage, right=of swreq] (swdev)   {SW design,\\impl. \& V\&V};
  \node[stage, right=of swdev] (val)     {System /\\vehicle\\validation};
  \node[stage, right=of val] (ops)       {Operation \&\\field monitoring};

  %--------------------------------------------------------------------
  % Theorem boxes ABOVE the lifecycle
  %--------------------------------------------------------------------
  \node[theobox,
        above=20mm of swreq,
        xshift=-5mm]
        (t34)
    {Theorems~\ref{Thrm_gen_CBI_samePKs}--\ref{Thrm_gen_CBI_diffPKs}:\\
     \emph{Typed failure data:} FP and FN counts $(k_1,k_2)$ observed.
     Applicable to structured SW V\&V campaigns and to validation /
     operation when FP/FN labels are logged.};

  %\node[theobox,
  %      left=10mm of t34] (t5)
  %  {Theorem~\ref{Thrm_gen_CBI}:\\
  %   \emph{Strongest, mode-aware bounds:} combines \textbf{PK}\ref{PK2}--\textbf{PK}\ref{PK5}
  %   (prior limits on FP/FN probabilities and accuracy) with FP/FN-typed
  %   failure data from SW V\&V, system validation, and operation.};

  \node[theobox,
        right=5mm of t34] (t12)
    {Theorems~\ref{Thrm_CBI_Bernoulli}--\ref{Thrm_CBI_Bernoulli_diffPKS}:\\
     \emph{Coarse failure data:} only total failures $k$ in
     $n$ classifications are logged, without distinguishing FP/FN.
     Used when validation or in-service logging records failures but
     not their types.};

  %--------------------------------------------------------------------
  % PK and Operational data boxes DIRECTLY ABOVE the theorems (symmetric)
  %--------------------------------------------------------------------
  \node[databox,
        above=-28.5mm of t34,
        xshift=-80mm]
        (pkbox)
    {\textbf{PK}\ref{PK1}--\textbf{PK}\ref{PK5} from pre-operational evidence:\\
     concept, safety analysis, safety requirements,
     architecture/design, lab testing.};

  \node[databox,
        above=-33.25mm of t34,
        xshift=145mm]
        (opdata)
    {Operational data from tests and use:\\
     \quad-- coarse failure counts $k$ in $n$ classifications;\\
     \quad-- FP/FN-typed counts $(k_1,k_2)$\\
     from SW V\&V, system/vehicle validation, and in-service operation.};

  %--------------------------------------------------------------------
  % GROUP BOXES in background layer
  %--------------------------------------------------------------------
  \begin{pgfonlayer}{bg}

    \node[groupbox,
          inner xsep=4mm,
          inner ysep=3mm,
          fit=(concept) (sysreq) (swreq) (swdev),
          label={[font=\Large,align=center]south:
             \textbf{PK-generating development stages}}]
          (pkstages) {};

    \node[groupbox,
          inner xsep=8mm,
          inner ysep=5mm,
          fit=(swdev) (val) (ops),
          label={[font=\Large,align=center]south:
             \textbf{Data-generating development/operational stages.}\\ \textbf{AV stack subsystems in operation (\emph{cf.} Fig.~\ref{fig:av_pipeline_classifiers})}}]
          (datastages) {};

    \node[groupbox,
          fit=(t34) (t12),
          label={[font=\Large,align=center]north:
             \textbf{CBI Theorems~\ref{Thrm_CBI_Bernoulli}--\ref{Thrm_gen_CBI_diffPKs}}}]
          (theogroup) {};

    % Lifecycle arrows
    \draw[arrow] (concept) -- (sysreq);
    \draw[arrow] (sysreq)  -- (swreq);
    \draw[arrow] (swreq)   -- (swdev);
    \draw[arrow] (swdev)   -- (val);
    \draw[arrow] (val)     -- (ops);

    % Summary arrows
    \draw[grouparrow]
      (pkstages.west) ..
      controls +(-25mm,20mm) and +(-50mm,-10mm) ..
      (pkbox.west);

    \draw[grouparrow]
      (datastages.east) ..
      controls +(25mm,20mm) and +(50mm,-10mm) ..
      (opdata.east);

    %\draw[grouparrow]
    %  (pkbox.south west) ..
    %  controls +(-5mm,-10mm) and +(-5mm,5mm) ..
    %  (theogroup.west);
    \draw[arrow, line width=1pt, -{Latex[length=4mm, width=3mm]}]
      (pkbox.east) -- (theogroup.west);

    %\draw[grouparrow]
    %  (opdata.west) ..
    %  controls +(5mm,-10mm) and +(5mm,5mm) ..
    %  (theogroup.east);
    \draw[arrow, line width=1pt, -{Latex[length=4mm, width=3mm]}]
      (opdata.west) --
      %controls +(5mm,-10mm) and +(5mm,5mm) ..
      (theogroup.east);

    \draw[grouparrow]
      (theogroup.south) ..
      controls +(0,-16mm) and +(0,18mm) ..
      (datastages.north);

  \end{pgfonlayer}

  \end{tikzpicture}%
}
  \caption{  Illustration of where the CBI theorems apply across an
  ISO~26262-style software (SW) lifecycle (illustrated for the AV stack of Fig.~\ref{fig:av_pipeline_classifiers}). \textbf{PK}\ref{PK1}--\textbf{PK}\ref{PK5} are
  justified from pre-operational evidence (e.g. concept, safety analyses, past operational data from similar software used in similar contexts, 
  SW safety requirements, SW architecture/design/algorithms, SW V\&V). During 
  SW V\&V, system/vehicle validation, and in-service operation,
  operational data---either coarse failure counts or FP/FN-typed
  counts---are combined with \textbf{PK}s via
  Theorems~\ref{Thrm_CBI_Bernoulli}--\ref{Thrm_gen_CBI_diffPKs} to obtain conservative bounds on the classifier's
  failure probability.}
\label{fig:practicalapplicationofTheorems}
\end{figure*}

\subsection{Practical Context and Guidance}
\label{subsec_practicalcontext}
%Theorems~\ref{Thrm_CBI_Bernoulli}--\ref{Thrm_gen_CBI_diffPKs} can be used in four ways:
%\begin{enumerate}
%\item[\textbf{3)}] The robustness of claims based on the Theorems can be checked; e.g., the impact of not accounting for failure types when using Theorem~\ref{Thrm_CBI_Bernoulli} is estimated by comparing with Theorem~\ref{Thrm_CBI_Bernoulli_diffPKS} (see Sections~\ref{sec_applications}, \ref{sec_discussion}). For analogous robustness checks, also see \cite{SalakoZhao2023TSE,SalakoZhao2024QRE}.
%\item[\textbf{4)}] The theorems indicate the ``strength'' of evidence needed to support a reliability claim. An example is stipulating how many failure--free classifications need to be observed, or how confident the assessor needs to be in the classifier's accuracy being sufficiently high \emph{before} observing the classifier in operation. In scenario~2, Section~\ref{sec_applications}: under Theorem~3 (which relies on \textbf{PK}\ref{PK1}), a single observed failure drives the worst--case posterior confidence to zero, and no amount of further operational evidence can regain the lost confidence. This ``infinite operational evidence'' phenomenon is a consequence of the admissible--prior set under \textbf{PK}\ref{PK1}; it highlights how conservative claims cannot be justified (they do not exclude requiring ``infinite operational evidence'') unless sufficiently strong prior evidence supports \textbf{PK}s that rule out adversarial corner cases (as \textbf{PK}\ref{PK5} does in Theorem~\ref{Thrm_gen_CBI_diffPKs}).
%\end{enumerate}

Theorems~\ref{Thrm_CBI_Bernoulli}--\ref{Thrm_gen_CBI_diffPKs} can be applied at various stages of development, review, and deployment; see Fig.~\ref{fig:practicalapplicationofTheorems}. First, during design and pre-integration \emph{verification} \emph{and} \emph{validation} (V\&V). Second, during certification-oriented assurance-case construction and independent assessment, theorem-derived posteriors can serve as explicit quantitative sub-claims that connect test/analysis evidence to higher-level safety claims, complementing prescriptive-process evidence with an auditable argument about what the data justifies \cite{Hawkins2013AssuranceCases}. Third, at system validation and pre-deployment trials (including scenario-based testing and shadow-mode evaluations), the theorems can be applied to operational-test datasets to obtain conservative bounds tailored to an ODD, aligning with AV-focused work that emphasizes combining operational testing with other design/verification evidence in a statistically principled way \cite{zhao_assessing_2020}. Fourth, during early staged roll-out, theorem-based updates naturally support ``confidence bootstrapping'' from failure-free exposure under deliberately controlled increments in AV fleet \cite{BishopPovyakaloStrigini2022Bootstrapping}. Finally, in post-deployment monitoring and change management (e.g. after software updates or distribution shift), theorem-based assessments can be re-run incrementally as new evidence arrives, consistent with Assurance~2.0 \cite{BloomfieldRushby2020A2Manifesto,BloomfieldRushby2022A2Confidence}. %In this regard, Bayesian inference can sometimes help \cite{littlewood_validation_1993,zhao_assessing_2019}.

Before operational testing, assessors may justify \textbf{PK}s using various forms of evidence. Formal expert--elicitation methods help to translate an assessor's judgment---e.g. translating perceptions of the development team’s competence, the plausibility/robustness of implemented algorithms, and confidence in the engineering organization and toolchain---into probability statements or constrained prior sets \cite{OHagan2006UncertainJudgements,Garthwaite2005Eliciting}. For example, experts may be presented with alternative failure/success scenarios and asked to express how surprised they would be (on an appropriate scale) to see such scenarios given the ancillary evidence. Elicitation can be supported by Bayesian-belief networks that take into account classifier training maturity, classification algorithm choice, and verification artifacts \cite{Fenton2007RankedNodes,SalakoStrigini2014}. Elicitation can be carried out with a single assessor or among a group of assessors to obtain \textbf{PK}-parameter values that reflect a consensus view. Elicited \textbf{PK}s can be refined via Bayesian inference using unit/integration test evidence %In software--intensive safety arguments, such pre-operational evidence is typically complemented by product- and process-derived artifacts (e.g. requirements and design analyses, code review findings, static-analysis results, test outcomes, and defect discovery/fix history); together, these form evidence-based justification for trust in safety--relevant software functions \cite{Hawkins2013AssuranceCases,Littlewood2000Roadmap}. Bayesian--networks further support incorporating qualitative process and assurance judgments (e.g. about methods, tools, the development team's expertise/track record, and verification maturity) in a structured probabilistic model, making them a natural framework for encoding \textbf{PK}s \cite{Fenton2007RankedNodes,SalakoStrigini2014}. This emphasis on combining non-operational evidence with limited operational observations is also motivated by the fact that direct statistical demonstrations of ultra-high software reliability can require infeasibly large testing effort \cite{butler_infeasibility_1993,littlewood_validation_1993}.

%The most constrained result (Theorem~\ref{Thrm_gen_CBI}) requires conditions linking $(n,k_1,k_2)$ and the assessor thresholds (e.g. lower bounds on $k_1$ and $k_2$ in the theorem statement). These conditions may not hold in low-failure regimes or early deployment.
 
%Treating \textbf{PK}\ref{PK1}--\textbf{PK}\ref{PK5} as \emph{constraints} that define a \emph{set} of admissible priors over $(P,Q)$ aligns with robust Bayesian and imprecise-probability frameworks, which advocate making inferences that remain valid under plausible prior variation when the prior is only partially defensible \cite{Berger1990RobustBayesSensitivity,BergerBerliner1986AoS,Walley1991ImpreciseProbabilities}. %Worst-case (conservative) posterior bounds follow by optimizing over all priors consistent with the PKs, characterizing sensitivity of the inference to arbitrary modeling choices \cite{Berger1990RobustBayesSensitivity,Walley1991ImpreciseProbabilities}.

Sensitivity analysis can reveal whether CBI results are sensitive to \textbf{PK}-parameter values in a given practical scenario; if they are insensitive, the precise parameter values are unimportant over the specified range---any values from this range ``\emph{will do}'' for assessment, as the following two examples show.

\begin{figure}[t!]
	\centering
	\includegraphics[width=.9\linewidth]{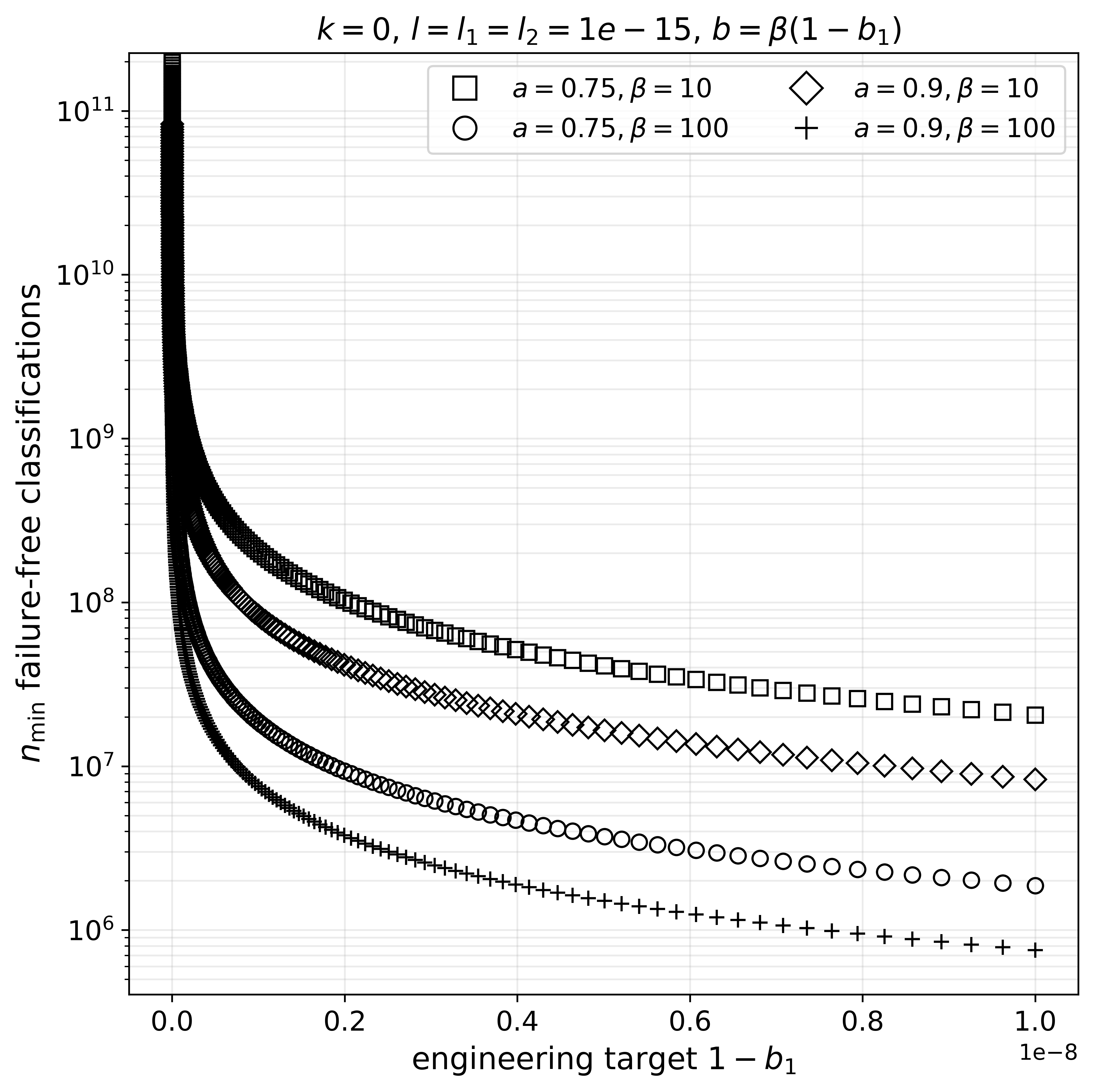}
	\caption{sensitivity of failure-free evidence needed to obtain $95$\% confidence in a \emph{pfc} bound, when the assessor is unsure about the precise value of their confidence $a$ in an engineering target-\emph{pfc} $1-b_1$.}
    \label{fig:a_sensitivity analysis}
\end{figure}
Let $n_{\min}$ be the minimum number of failure-free classifications needed to conservatively achieve $95$\% posterior confidence in the classifier satisfying the \emph{pfc} bound $b$, given the assessor's \textbf{PK}\ref{PK4} confidence $\mathbb P(\Theta\geqslant b_1)=a$ where $b=\beta (1-b_1)$ for some bounded scaling $\beta>1$. The sensitivity of $n_{min}$ to $a,\, b,\, 1-b_1,\, \beta$ can be analysed using families of curves such as those in Fig.~\ref{fig:a_sensitivity analysis}. For example, moving upward along a vertical line at a given engineering target-\emph{pfc} $1-b_1$, either a larger $n_{\min}$ is needed to gain posterior confidence in smaller $b$ bounds (with a fixed prior confidence ``$a$'' in the classifier satisfying $1-b_1$ bound) or a larger $n_{\min}$ is needed for posterior confidence in a fixed $b$ bound (but with less prior confidence ``$a$'' in the classifier satisfying the  $1-b_1$ bound). So, for a testing budget of $N$ classification tasks, and with an elicited \textbf{PK}\ref{PK4} prior confidence, the assessor can determine the smallest bound $b^*$ they can achieve $95$\% posterior confidence in. Specifically $b^*=\beta^*(1-b_1^*)$, where $\beta^*$ is either the infimum of those $\beta$ with $(a,\beta)$-curves that intersect the horizontal $N$ budget line at/before $1-b_1$, or the supremum of those $\beta$ with $(a,\beta)$-curves that intersect the budget line at/after $1-b_1$, and $(1-b_1^*)$ is the engineering target at which the $(a,\beta^*)$-curve intersects the budget line.

The i.i.d.\ categorical process of Section~\ref{sec_methodology} assumes temporal statistical independence and stationarity of failure/success probabilities---properties that may only hold approximately in practice. Sensitivity analysis can reveal whether the i.i.d. assumption leads to relatively optimistic assessment claims compared with using a more sophisticated model (e.g. see \cite{SalakoZhao2023TSE,SalakoZhao2024QRE}). For example, conservative confidence bounds can be obtained using the stationary Markov-chain in Fig.~\ref{fig:three_state_stationary_markov} and these \textbf{PK}s: for $j=1,2,3$, (\textbf{PK}2-a) $\mathbb P({\widetilde \Theta}_j\geqslant b_1)=a$; (\textbf{PK}3-a) $\mathbb P({\widetilde P}_j\geqslant l_1, {\widetilde Q}_j\geqslant l_2)=1$; (\textbf{PK}\ref{PK4}); (\textbf{PK}\ref{PK5}); (\textbf{PK}4)  confidence in the i.i.d. assumption $\mathbb P(P={\widetilde P}_2={\widetilde P}_3, Q={\widetilde Q}_2={\widetilde Q}_3)=\eta$ for some $0\leqslant\eta\leqslant 1$. This generalisation models deviations from the i.i.d. model using $\eta$---with $\eta=1$ we recover the i.i.d. model, while $\eta=0$ rules out the i.i.d. model completely. 

\begin{figure}[t]
\resizebox{0.95\linewidth}{!}{%
\centering
\begin{tikzpicture}[
    x=1cm, y=1cm,
    >=Stealth,
    font=\small,
    state/.style={
        circle,
        draw=none,
        fill=gray!25,
        text=black,
        minimum size=13mm,
        inner sep=0pt
    },
    trans/.style={
        draw=gray!70,
        line width=1.2pt,
        -{Stealth[length=3.2mm,width=2.2mm]},
        shorten <=4pt,
        shorten >=4pt
    },
    lab/.style={
        text=black,
        fill=white,
        inner sep=1pt
    }
]

\node[state] (S)  at (0,2.4)   {success};
\node[state] (FP) at (-2.6,0)  {FP};
\node[state] (FN) at ( 2.6,0)  {FN};

\draw[trans]
    (S) edge[loop above, looseness=8] node[lab, above] {${\widetilde \theta}_1$} (S);

\draw[trans]
    (FP) edge[loop left, looseness=8] node[lab, left] {${\widetilde p}_2$} (FP);

\draw[trans]
    (FN) edge[loop right, looseness=8] node[lab, right] {${\widetilde q}_3$} (FN);

\draw[trans]
    (S) edge[bend left=18] node[lab, right=5pt] {${\widetilde p}_1$} (FP);

\draw[trans]
    (FP) edge[bend left=18] node[lab, left=5pt] {${\widetilde \theta}_2$} (S);

\draw[trans]
    (S) edge[bend right=18] node[lab, left=5pt] {${\widetilde q}_1$} (FN);

\draw[trans]
    (FN) edge[bend right=18] node[lab, right=5pt] {${\widetilde \theta}_3$} (S);

\draw[trans]
    (FP) edge[bend right=16] node[lab, below=1pt] {${\widetilde q}_2$} (FN);

\draw[trans]
    (FN) edge[bend right=16] node[lab, above=1pt] {${\widetilde p}_3$} (FP);

\end{tikzpicture}%
}\caption{Three-state stationary Markov model for classifier outcomes. The initial distribution is $(\theta,p,q)$ and the transition probabilities are $({\widetilde \theta}_j,{\widetilde p}_j,{\widetilde q}_j)$ for $j=1,2,3$, satisfying $\theta=1-p-q$, $\ {\widetilde \theta}_j=1-{\widetilde p}_j-{\widetilde q}_j$, $\ {\widetilde p}_1\theta=p-p{\widetilde p}_2-q{\widetilde p}_3$ and $\ {\widetilde q}_1\theta= q-p{\widetilde q}_2-q{\widetilde q}_3$.}
\label{fig:three_state_stationary_markov}
\end{figure}
\begin{figure}[t]
\centering
\includegraphics[width=.9\columnwidth]{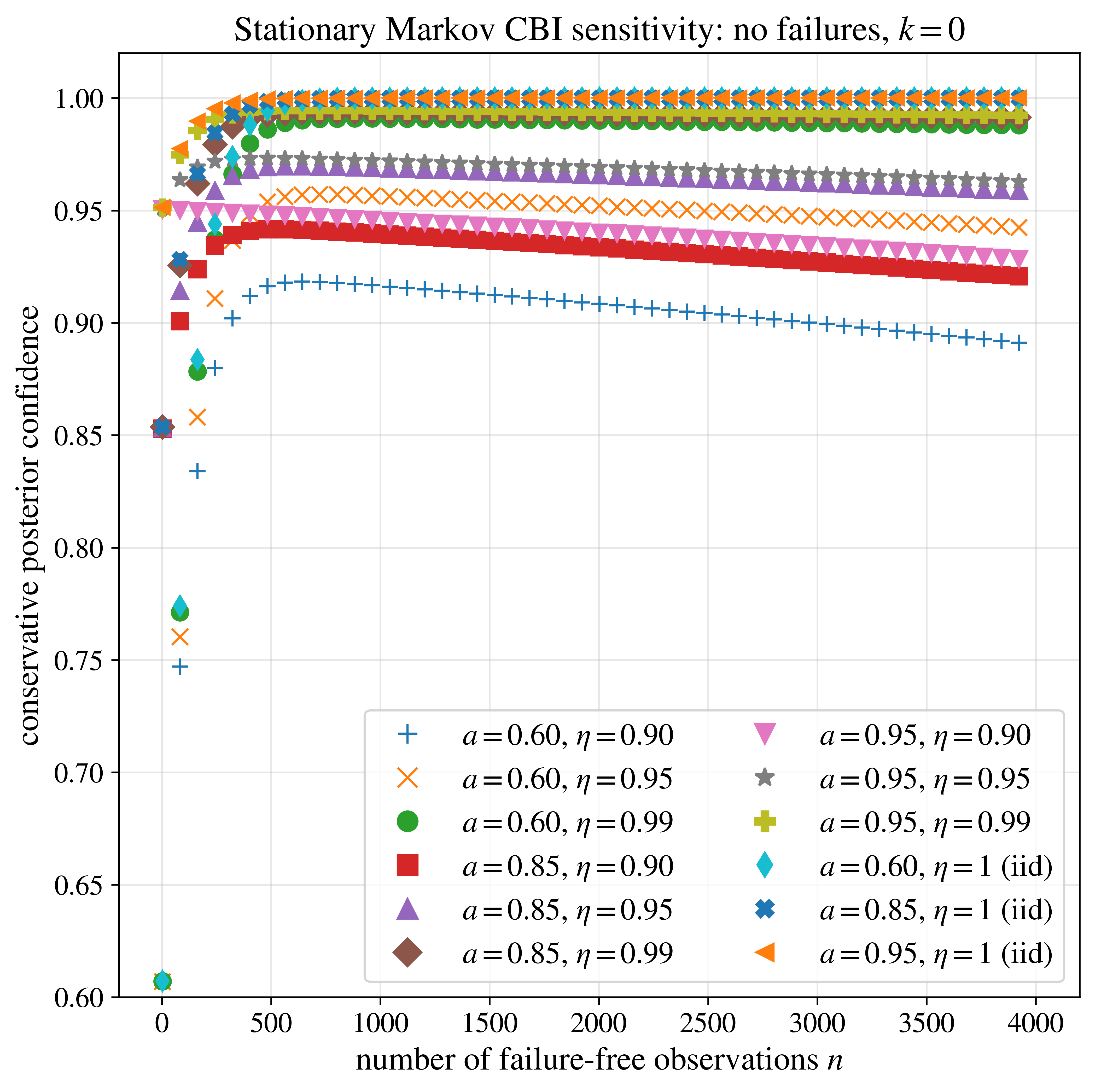}
\caption{Sensitivity of conservative posterior confidence to $a$ and $\eta$, when there are no failures. Here, $l_1=l_2=10^{-9}$, $b=10^{-2}$, $1-b_1=10^{-4}$.}
\label{fig:sensitivity-agamma-k0}
\end{figure}
In Fig.~\ref{fig:sensitivity-agamma-k0} the conservative posterior confidence in bound $b=10^{-2}$, from i.i.d. and stationary non-i.i.d. models, are compared. In this scenario, the i.i.d. model is the most optimistic; the weaker the i.i.d. assumption (i.e. the smaller $\eta$ is), the smaller the worst-case confidence in the classifier. By accumulating failure-free evidence, the posterior confidence from \emph{all} of the models initially monotonically increases. However, the non-i.i.d. curves eventually begin to monotonically decrease at different rates. Asymptotically, all the non-i.i.d. curves converge to $0$ posterior confidence. Here, the closer the values of $\eta$ and $a$ are to $1$, the better the agreement between the i.i.d. and non-i.i.d. models for longer. The curves highlight when the i.i.d. model does not deviate substantially from the non-i.i.d. models. This is consistent with the findings in \cite{SalakoZhao2023TSE}.         

\subsection{Limitations and Future Work}
\label{sec:limitations_future}
%, used to estimate the impact of the i.i.d. assumption on assessments, exemplifies a general strategy. Whenever there is reason to  doubt a model's likelihood function (e.g. evidence suggests certain modeling assumptions may not hold in a given practical context, or important information is missing from the available data, or there are latent variables), such uncertainty can be captured by the prior and a generalized likelihood (with the ``likelihood-in-question'' as a special case). Using the generalized model, the worst claim supported by this model can be determined and compared against the worst claim supported by the more restrictive model. If the difference between these claims is negligible, then there is little impact from using the more restrictive model conservatively.

AV operational data can be clustered by route/weather, can exhibit non--stationarity (e.g. software updates, distribution shift from changing driving conditions), and may contain correlated failures (\emph{cf.} safety-critical software in general \cite{Bishop_failureclustering_1993}).

The statistical model treats outcomes as \emph{success} \emph{vs.} \emph{two failure modes} (FP/FN), and explicitly collapses multiple success types. This ``failure-centric'' focus omits potentially decision--relevant structure (e.g. near--misses, degraded performance modes, or graded severity). More broadly, ``low fidelity'' in \eqref{eqn_Bernoulli_CBI_problem} is primarily \emph{missing failure--type labels}, whereas operational fidelity issues can also include noisy labels, uncertain counts, partial observability of reliability-related events/event--types, missing details of operational modes, more failure and success types, and scenario--dependent misclassification costs.

The primary quantity \emph{pfc} (i.e. $P+Q$) is bounded in the form of confidence statements (e.g. \eqref{eqn_upperconfbnd_accuracy}--\eqref{eqn_Bernoulli_CBI_problem}). However, system-level safety cases often require linking such component-level rates to hazard rates, exposure, and severity (e.g. how ``classifications'' map to miles, ODD coverage, or risk per hour). The AV examples in Sections~\ref{sec_applications}, \ref{sec_discussion} do not yet provide calibrated mappings from \emph{pfc} bounds to end-to-end safety metrics.

The  $(P,Q)$ distribution can change due to changes in the operational environment---e.g. the AV operating in  driving conditions very different from those during assessment---or due to significant AV software updates---e.g. further classifier training or a change in decision thresholds. Incorporating such ``distribution drift'' into conservative assessments can be done; e.g. using generalizations of CBI models that account for changes in operational context, such as the models used in \cite{zhao_assessing_2020,salako_conservative_2021,AghazadehChakherlouStrigini2024PreChange}. Without such generalizations, this paper's results must be applied only ``in-between'' significant software updates or ``in-between'' changes in operational environment.

Future work can investigate supporting other forms of conservative dependability claims; e.g. CBI bounds on the timeliness of mitigating actions taken by safety subsystems.

\section{Conclusion}
\label{sec_conc}

The fidelity of operational evidence places limits on what claims are justified in an assessment of safety-critical-software reliability. When assessing safety-relevant classifiers in an AV stack, this paper illustrates how an assessor's attempts at supporting conservative claims---using ``low-fidelity'' evidence with a ``\emph{guaranteed-to-be-conservative}'' statistical inference approach (i.e. CBI)---can be valid sometimes, but invalid at other times. The problem is that the CBI guarantee is conditional on the evidence, so that if some evidence is not used with CBI (such as missing information about failure-types that have occurred) the results of inference may be undesirably optimistic, compared with Bayesian inference based on a fully--specified feasible prior and ``all'' of the evidence.

An assessor often has no choice but to work with ``missing detail'' in operational evidence. The CBI extensions presented in this paper aid in using such evidence conservatively. 
 
%\section*{Acknowledgments}
%We are grateful to the anonymous reviewers whose comments were very helpful in improving the presentation. We are also grateful to Bev Littlewood for giving very helpful feedback on an initial draft of this paper. This work was partly funded by the European Union’s Horizon 2020 Research and Innovation Programme under grant agreement No 956123, and by the UK EPSRC through the End-to-End Conceptual Guarding of Neural Architectures [EP/T026995/1]. Xingyu Zhao's contribution is partially supported through Fellowships at the Assuring Autonomy International Programme.
%We thank the anonymous reviewers whose comments helped us to improve the manuscript.

\bibliographystyle{IEEEtran}
\bibliography{ref}

\appendices
\section{Proof of Theorem~\ref{Thrm_gen_CBI_diffPKs}}
\label{subsec_proof_Theorem4}
\begin{proof}
%\begin{theorem}
%\label{Thrm_gen_CBI_diffPKs}
%\noindent\textbf{Problem:}

\noindent For $n,\,k_1,\,k_2$, $a$, $b$, $b_1$, $l_1$, $l_2$, $P$, $Q$, $\mathcal{D}$ as already defined and constrained, we seek 
\begin{align*} &\underset{\mathcal{D}}{\inf}\, \dfrac{\mathbb E[L(n,k_1,k_2; P, Q)\mathbf{1}_{P+Q\leqslant b}]}{\mathbb E[L(n,k_1,k_2; P, Q)]} \\
    s.t. \,\,&\text{\bf PK}\ref{PK4},\,\text{\bf PK}\ref{PK5}  \nonumber 
\end{align*}

%\noindent\textbf{Solution:} 
%\noindent Let  $L_{i}:=L(n,k_1,k_2; p_{i},q_{i})$ for $(p_{i},q_{i})\in {\boldsymbol S_{i}}$ in Fig.~\ref{fig_omega_partition_gen_CBI_simple}. The infimum $\Phi^*$ takes the form
%\begin{align*}
% \Phi^*=&\frac{aL_{2*}}{aL_{2*}+(1-a)L^*_{1}}{\boldsymbol 1}_{1-b_1\leqslant b}
%\end{align*}
%That is, objective function \eqref{eqn_gen_CBI_problem_diffPKs} attains its infimum, $\Phi^*$, with  a discrete prior distribution over $\Omega$ of the form (e.g. see Fig.~\ref{fig_preferred_locations_gen_CBI_1}):    
%\begin{align*}
%    \mathbb P(P=p_i,Q=q_i)=\begin{cases}
%    1-a, & \text{ if } i=1\\
%    a, & \text{ if } i=2%;\\
           %0, & \text{ otherwise}
%    \end{cases}
%\end{align*}
%\end{theorem}

Define $C=[p+q\leqslant 1-b_1]$ and $G=[p+q\leqslant b]$. The Fig.~\ref{fig_omega_partition_gen_CBI_simple} partition is therefore just ${\boldsymbol S}_1=C^c\cap\Omega$ and ${\boldsymbol S}_2=C\cap\Omega$. So, every admissible prior distribution satisfies ${\mathbb P}({\boldsymbol S}_1)=1-a$, ${\mathbb P}({\boldsymbol S}_2)=a$. There is no remaining mass-allocation freedom.

Under the theorem’s reliability-target ordering, $1-b_1\leqslant b$ so ${\boldsymbol S}_2=C\cap\Omega\subseteq G$. Thus, all mass in ${\boldsymbol S}_2$ contributes to the numerator. Meanwhile, ${\boldsymbol S}_1$ contains points above the $b$-line, because ${\boldsymbol S}_1=[p+q>1-b_1]\cap\Omega$ and $1-b_1\leqslant b$, so ${\boldsymbol S}_1\cap G^c\neq\varnothing$, up to usual boundary/closure conventions.

Now, write $f(p,q)=L(p,q)\mathbf 1_G(p,q)$ and $g(p,q)=L(p,q)$, where $L(p,q)$ is shorthand for $L(n,k_1,k_2;p,q)$. Then, the objective is
\[
\inf_{{\mathbb P}\in \mathcal D}\frac{\mathbb E_{\mathbb P}[f(P,Q)]}{\mathbb E_{\mathbb P}[g(P,Q)]}.
\]\kizito{correction for camera ready: D and E}

The functions are bounded on $\Omega$: $g=L$ is continuous, while $f=L\mathbf 1_G$ is bounded piecewise continuous with possible discontinuity only on the line $p+q=b$. Hence, we may appeal to fixed-point arguments to solve the problem after refining ${\boldsymbol S}_1,{\boldsymbol S}_2$ by the regions $G$, $G^c$, and the boundary $p+q=b$. Indeed, by Theorem~1 of \cite{moreno1991robust} or Proposition~2.2 of \cite{salako2026fixedpointcharacterisationsextremaldistributions}, we may restrict the optimization to priors that are the weak limits of feasible discrete priors, supported by one selected location in each of ${\boldsymbol S}_1$ and ${\boldsymbol S}_2$. So, to find the infimum, we use Dinkelbach-type iteration; we can do this because the infimum problem is ``separable'', in that it can be recast as a weighted sum of Dinkelbach differences that each need to be minimized. 

For a candidate infimum value $\phi$ for the objective function, each Dinkelbach difference takes the form $h_\phi=f-\phi g$. On subset $G$, $h_\phi=(1-\phi)L$, whereas on $G^c$, $h_\phi=-\phi L$. The infimum is given by appropriately allocating probability mass to $(p,q)$ locations in ${\boldsymbol S}_1$ and ${\boldsymbol S}_2$ that minimize $h$.

Since $0\leqslant \phi\leqslant 1$, this has the following consequences for determining the $h$-minimizing $(p,q)$ locations. On ${\boldsymbol S}_2$, we have ${\boldsymbol S}_2\subseteq G$. Therefore, on ${\boldsymbol S}_2$, minimizing $h_\phi=(1-\phi)L$ means minimizing $L$; i.e. the $(p,q)$ location that gives the likelihood value $L_{2*}
=
\inf_{(p,q)\in {\boldsymbol S}_2\cap G}L(p,q)$ on ${\boldsymbol S}_2$, with boundary locations interpreted as limits of non-boundary locations.

Analogously, on ${\boldsymbol S}_1$, the minimizing $(p,q)$ location can be in ${\boldsymbol S}_1\cap G$ or ${\boldsymbol S}_1\cap G^c$. On ${\boldsymbol S}_1\cap G$, the Dinkelbach difference is non-negative, $h_\phi=(1-\phi)L\geqslant 0$, while on ${\boldsymbol S}_1\cap G^c$ it is non-positive, $h_\phi=-\phi L\leqslant 0$. Thus, the location lies in ${\boldsymbol S}_1\cap G^c$, and there it maximizes $L$. That is, the location that gives the likelihood value 
\[
L_1^*
=
\sup_{(p,q)\in {\boldsymbol S}_1\cap G^c}L(p,q)
=
\sup_{(p,q)\in {\boldsymbol S}_1\cap[p+q>b]}L(p,q).
\]
Again, with boundary locations interpreted as limits of non-boundary locations, so the ``$\sup$'' is over ${\boldsymbol S}_1\cap[p+q\geqslant b]$.

This pair of $h$-minimizing $(p,q)$ locations are the support of an extremal prior that assigns probability $1-a$ to the location in ${\boldsymbol S}_1\cap G^c$ and probability $a$ to the location in ${\boldsymbol S}_2\cap G$; for example, see Fig.~\ref{fig_preferred_locations_gen_CBI_1}. 
%\[
%{\mathbb P}^*=(1-a)\delta_{(p_1,q_1)}+a\delta_{(p_2,q_2)},
%\]
%where \((p_1,q_1)\in {\boldsymbol S}_1\cap G^c\) attains or approaches \(L_1^*\), and \((p_2,q_2)\in {\boldsymbol S}_2\cap G\) attains or approaches \(L_{2*}\).
Substituting these support values into the posterior-confidence ratio gives
\[
\Phi^*
=
\frac{
a L_{2*}
}{
aL_{2*}+(1-a)L_1^*
}
\mathbf 1_{1-b_1\leqslant b}.
\]

The indicator $\mathbf 1_{1-b_1\leqslant b}$ clarifies the solution is not necessarily zero when \textbf{PK}\ref{PK4}'s high-confidence region lies at or below the reliability target $b$-line; equivalently, $1-b\leqslant b_1$ implies the indicator equals $1$.\qedhere 
\end{proof}

\section{Proof that $n_2\overset{n_1\to\infty}{\longrightarrow}\frac{b_1}{1-b_1}$ in Fig.~\ref{fig_n2vsn1_comparisons}}

\begin{proof}
In Scenario~2, for each $n_1$ we first deduce a bound $b=b(n_1)$ by requiring that
the \emph{failure-free} evidence $k=0$ over $n_1$ classifications yields target
confidence $c\in(0,1)$ (here $c=0.95$):
\begin{equation}
\Phi_0(b;n_1)=c.
\label{eq:phi0_target}
\end{equation}
Then, after observing one (untyped) failure at the $(n_1+1)$-th classification,
we add $n_2$ further \emph{failure-free} classifications, so the total count becomes $n = n_1 + 1 + n_2$\ \ ($k=1$), and we choose $n_2$ so that the conservative confidence returns to $c$:
\begin{equation}
\Phi_1(b;n_1+1+n_2)=c.
\label{eq:phi1_target}
\end{equation}
For Theorems~\ref{Thrm_CBI_Bernoulli} and \ref{Thrm_CBI_Bernoulli_diffPKS}, and for all sufficiently large $n_1$ (i.e. to the
right of the ``hump'' in Fig.~\ref{fig_n2vsn1_comparisons}),
\begin{equation}
\lim_{n_1\to\infty} n_2 \;=\; \frac{b_1}{1-b_1}\,.
\label{eq:n2_ceiling}
\end{equation}
This can be shown in three steps.

\noindent \textbf{step I: }(\emph{deducing $b=b(n_1)$ from the failure-free condition.})
For $k=0$ in the regime $b>1-b_1$, Theorems~\ref{Thrm_CBI_Bernoulli} and \ref{Thrm_CBI_Bernoulli_diffPKS} give
\[
\Phi_0(b;n_1)
=\frac{a\,b_1^{n_1}}{a\,b_1^{n_1}+(1-a)(1-b)^{n_1}}.
\]
Imposing~\eqref{eq:phi0_target} and rearranging yields
\begin{equation}
\left(\frac{1-b}{b_1}\right)^{n_1}
= K,
\qquad
K:=\frac{a(1-c)}{c(1-a)}.
\label{eq:ratio_def_K}
\end{equation}
When $a<c$, as in Scenario~2 in the main text of the paper, this ensures $K\in(0,1)$. Thus, $(1-b)/b_1 = K^{1/n_1}\overset{n_1\to\infty}{\longrightarrow} 1$, so $b\downarrow (1-b_1)$.

\noindent \textbf{step II: }(\emph{recovery after one untyped failure.})
For $k=1$ and sufficiently large $n_1$, the least-favorable likelihood in ${\boldsymbol S}_2$
is attained on $[P+Q=1-b_1]$ and the most-favorable likelihood in
${\boldsymbol S}_1\cap[P+Q\geqslant b]$ is attained on  $[P+Q=b]$, so Theorems~\ref{Thrm_CBI_Bernoulli} and \ref{Thrm_CBI_Bernoulli_diffPKS} give conservative
confidence
\[
\Phi_1(b;n)
=\frac{a(1-b_1)b_1^{\,n-1}}{a(1-b_1)b_1^{\,n-1}+(1-a)b(1-b)^{n-1}}\,.
\]
Imposing \eqref{eq:phi1_target} (with $n=n_1+1+n_2$) and
rearranging yields
\begin{equation}
\left(\frac{1-b}{b_1}\right)^{n-1}
=\frac{a(1-c)}{c(1-a)}\cdot\frac{(1-b_1)}{b}
=K\cdot \frac{1-b_1}{b}.
\label{eq:phi1_rearranged}
\end{equation}
Using $n-1=n_1+n_2$ and substituting~\eqref{eq:ratio_def_K} into the l.h.s. of 
\eqref{eq:phi1_rearranged} then gives
\begin{equation}
\left(\frac{1-b}{b_1}\right)^{n_1+n_2}
=\left(\frac{1-b}{b_1}\right)^{n_1}\left(\frac{1-b}{b_1}\right)^{n_2}
=K\left(\frac{1-b}{b_1}\right)^{n_2}.
\label{eq:phi1_rearrangedv2}
\end{equation}
Equating the far r.h.s. of both \eqref{eq:phi1_rearranged} and \eqref{eq:phi1_rearrangedv2}, the $K$s cancel:
\begin{equation}
\left(\frac{1-b}{b_1}\right)^{n_2}=\frac{1-b_1}{b}.
\label{eq:n2_identity}
\end{equation}

\noindent \textbf{step III: }(\emph{taking $n_1\to\infty$.})
By~\eqref{eq:ratio_def_K},
\begin{equation}
\ln\!\left(\frac{1-b}{b_1}\right)=\frac{\ln K}{n_1}\,.
\label{eq:expK}
\end{equation}
Taking logs in~\eqref{eq:n2_identity} and rearranging,
\begin{equation}
n_2=\frac{\ln\!\left(\frac{1-b_1}{b}\right)}{\ln\!\left(\frac{1-b}{b_1}\right)}\,.
\label{eq:n2}
\end{equation}
Since $b=b(n_1)\downarrow (1-b_1)$, write $b=(1-b_1)+\delta$ with $\delta=\delta(n_1)\downarrow 0$. Then,
\begin{equation}
\ln\!\left(\frac{1-b_1}{b}\right)
=-\ln\!\left(1+\frac{\delta}{1-b_1}\right)
= -\frac{\delta}{1-b_1}+o(\delta)\,.
\label{eq:logexpansion}
\end{equation}
From \eqref{eq:expK}, $(1-b)/b_1=\exp\!\big(\frac{\ln K}{n_1}\big)$, so that
\[
1-\frac{\delta}{b_1}=\frac{1-b}{b_1}
=\exp\!\left(\frac{\ln K}{n_1}\right)
=1+\frac{\ln K}{n_1}+o\!\left(\frac1{n_1}\right)\,.
\]
So, $\delta = -b_1\frac{\ln K}{n_1}+o(1/n_1)$ and
\begin{equation}
\ln\!\left(\frac{1-b}{b_1}\right)=\frac{\ln K}{n_1}+o\left(\frac{1}{n_1}\right)\,.
\label{eq:logexpansion2}
\end{equation}
Substituting \eqref{eq:logexpansion}, \eqref{eq:logexpansion2}, and the expansion of $\delta$, in \eqref{eq:n2} gives
\[
n_2
=\frac{-\frac{\delta}{1-b_1}+o(\delta)}{\frac{\ln K}{n_1}+o(\frac{1}{n_1})}
=\frac{-\frac{-b_1\ln K}{n_1(1-b_1)}+o(\frac{1}{n_1})}{\frac{\ln K}{n_1}+o(\frac{1}{n_1})}
\;\longrightarrow\;
\frac{b_1}{1-b_1}
\]
as $n_1\to\infty$, which proves~\eqref{eq:n2_ceiling}. The integer asymptotic supremum on $n_2$ this implies is $\lceil\frac{b_1}{1-b_1}\rceil$.
\end{proof}

\section{Proof of Interchange of Infima in \eqref{eqn_CBIremainsConservativeWithUncertainty}}
\begin{proof}
Let $K:=\{0,1,\dots,k\}$ and let $\mathcal{D}$ denote the admissible set of priors. The elements of $K$ and $\mathcal D$
do not depend on each other. Define the objective
$f:K\times\mathcal{D}\to[0,1]$ by
$f(k_1,{\mathbb P})$ is equal to the quantity being minimized in~\eqref{eqn_CBIremainsConservativeWithUncertainty} (the posterior
confidence expression under $(k_1,k-k_1)$ and prior ${\mathbb P}$).
Set
\[
\alpha := \inf_{k_1\in K}\ \inf_{{\mathbb P}\in\mathcal{D}} f(k_1,{\mathbb P}),
\qquad
\beta := \inf_{{\mathbb P}\in\mathcal{D}}\ \inf_{k_1\in K} f(k_1,{\mathbb P}).
\]
We show $\alpha=\beta$ by proving both inequalities.

\emph{First, $\alpha\leqslant \beta$.}
Fix any ${\mathbb P}\in\mathcal{D}$. For every $k_1\in K$,
\[
\inf_{{\mathbb P}'\in\mathcal{D}} f(k_1,{\mathbb P}') \leqslant f(k_1,{\mathbb P}).
\]
Taking $\inf_{k_1\in K}$ of both sides gives
\[
\inf_{k_1\in K}\inf_{{\mathbb P}'\in\mathcal{D}} f(k_1,{\mathbb P}') \leqslant \inf_{k_1\in K} f(k_1,{\mathbb P}),
\]
i.e.\ $\alpha \leqslant \inf_{k_1\in K} f(k_1,{\mathbb P})$. Since this holds for every
${\mathbb P}\in\mathcal{D}$, taking $\inf_{{\mathbb P}\in\mathcal{D}}$ yields
\[
\alpha \leqslant \inf_{{\mathbb P}\in\mathcal{D}}\inf_{k_1\in K} f(k_1,{\mathbb P})=\beta.
\]

\emph{Second, $\alpha\geqslant \beta$.}
Fix any $k_1\in K$. For every ${\mathbb P}\in\mathcal{D}$,
\[
f(k_1,{\mathbb P}) \geqslant \inf_{k_1'\in K} f(k_1',{\mathbb P}).
\]
Taking $\inf_{{\mathbb P}\in\mathcal{D}}$ of both sides gives
\[
\inf_{{\mathbb P}\in\mathcal{D}} f(k_1,{\mathbb P}) \geqslant \inf_{{\mathbb P}\in\mathcal{D}}\inf_{k_1'\in K} f(k_1',{\mathbb P}),
\]
i.e.\ $\inf_{{\mathbb P}\in\mathcal{D}} f(k_1,{\mathbb P}) \geqslant \beta$. Since this holds for every
$k_1\in K$, taking $\inf_{k_1\in K}$ yields
\[
\alpha=\inf_{k_1\in K}\inf_{{\mathbb P}\in\mathcal{D}} f(k_1,{\mathbb P}) \geqslant \beta.
\]
Combining $\alpha\leqslant\beta$ and $\alpha\geqslant\beta$ gives $\alpha=\beta$.
\end{proof}

\end{document}